\documentclass{article}

\usepackage[preprint]{neurips_2026}

\usepackage{amssymb}
\usepackage{amsmath}
\usepackage[ruled,vlined,linesnumbered]{algorithm2e}
\usepackage{booktabs}
\usepackage{multicol}
\usepackage{multirow}
\usepackage{placeins}
\usepackage{subcaption}
\usepackage{url}
\usepackage{hyperref}
\usepackage{graphicx}
\usepackage{amsthm}
\newtheorem{proposition}{Proposition}
\newtheorem{definition}{Definition}

\title{Soft-MSM: Differentiable Context-Aware Elastic Alignment for Time Series}

\author{
  Christopher Holder \\
  School of Electronics and Computer Science \\
  University of Southampton \\
  Southampton, SO17 1BJ, United Kingdom \\
  \texttt{c.holder@soton.ac.uk}
  \And
  Anthony Bagnall \\
  School of Electronics and Computer Science \\
  University of Southampton \\
  Southampton, SO17 1BJ, United Kingdom \\
  \texttt{anthony.bagnall@soton.ac.uk}
}

\begin{document}

\maketitle

\begin{abstract}
Elastic distances like dynamic time warping (DTW) are central to time series machine learning because they compare sequences under local temporal misalignment. Soft-DTW is an adaptation of DTW that
can be used as a gradient-based loss by replacing the hard minimum in its
dynamic-programming recursion with a smooth relaxation. However, this approach
does not directly extend to elastic distances whose transition costs depend on
the local alignment context. Move-Split-Merge (MSM) is one such distance: it
uses context-aware split and merge penalties and has often outperformed DTW in
supervised and unsupervised time series machine learning tasks such as classification and clustering.

We introduce Soft-MSM, a smooth relaxation of MSM and an elastic
alignment loss with context-aware transition costs. Central to the formulation is a smooth gated surrogate for MSM's piecewise
split/merge cost, which enables gradients through both the dynamic-programming
recursion and the local transition structure. We derive the forward recursion, backward recursion, soft
alignment matrix, closed-form gradient, limiting behaviour, and
divergence-corrected formulation. Experiments on 112 UCR datasets show that Soft-MSM gives lower MSM barycentre loss than existing MSM barycentre methods, and yields significantly better clustering and nearest-centroid classification performance than Soft-DTW-based alternatives. An implementation is available in the
open-source \texttt{aeon} toolkit.
\end{abstract}

\noindent\textbf{Keywords:}
time series distances; dynamic time warping; soft-DTW; time series averaging; move-split-merge; time series clustering

\section{Introduction}
Measuring the similarity or distance between objects plays a fundamental role in tasks such as classification, clustering and regression. Time series machine learning (TSML), where we consider a time series as any ordered sequence of real-valued variables, is an active subfield in which distance functions are core components for tasks such as motif discovery, similarity search, and profiling. There are several comprehensive surveys of the extensive research into using distance functions for TSML~\cite{abanda19distance,holder24review,shifaz23elastic}.

Traditional distance functions (e.g., Minkowski) can assign large dissimilarity values to time series that are conceptually similar but slightly misaligned. To address this, a key focus in TSML research has been the development of time series specific distance functions that account for temporal misalignment. These are often called elastic distances, since they form an alignment path that conceptually stretches or warps series onto each other. Elastic distances have been found to be more effective for TSML than traditional distances for tasks such as classification~\cite{lines14elastic} and clustering~\cite{holder24review}.

Dynamic time warping (DTW)~\cite{berndt94dtw} is by far the most popular elastic distance measure for time series. It forms an alignment path between two series that minimises the pointwise distance between the series using dynamic programming with the Bellman recursion~\cite{bellman59adaptivecontrol}. Figure~\ref{fig:alignment-path-visualisation} characterises the benefits of realignment: Euclidean distance measures the series as being quite different even though the pattern of peaks and troughs is similar. DTW better captures this similarity and gives a smaller measure of distance than Euclidean distance.

\begin{figure}[h]
    \centering
    \begin{subfigure}[b]{0.45\textwidth}
        \centering
        \includegraphics[width=\textwidth]{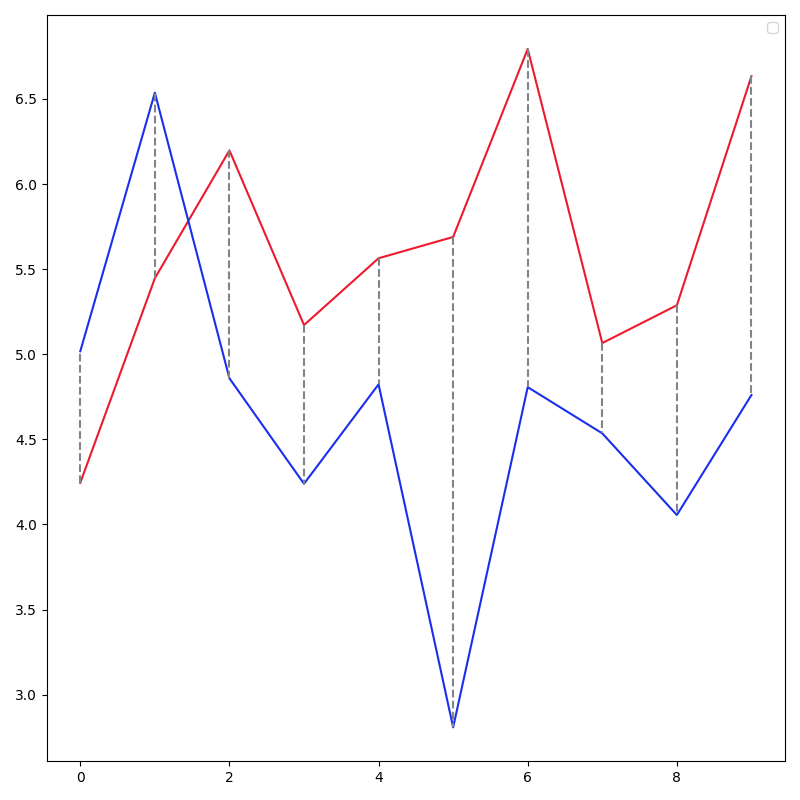}
        \caption{Euclidean distance.}
    \end{subfigure}
    \hfill
    \begin{subfigure}[b]{0.45\textwidth}
        \centering
        \includegraphics[width=\textwidth]{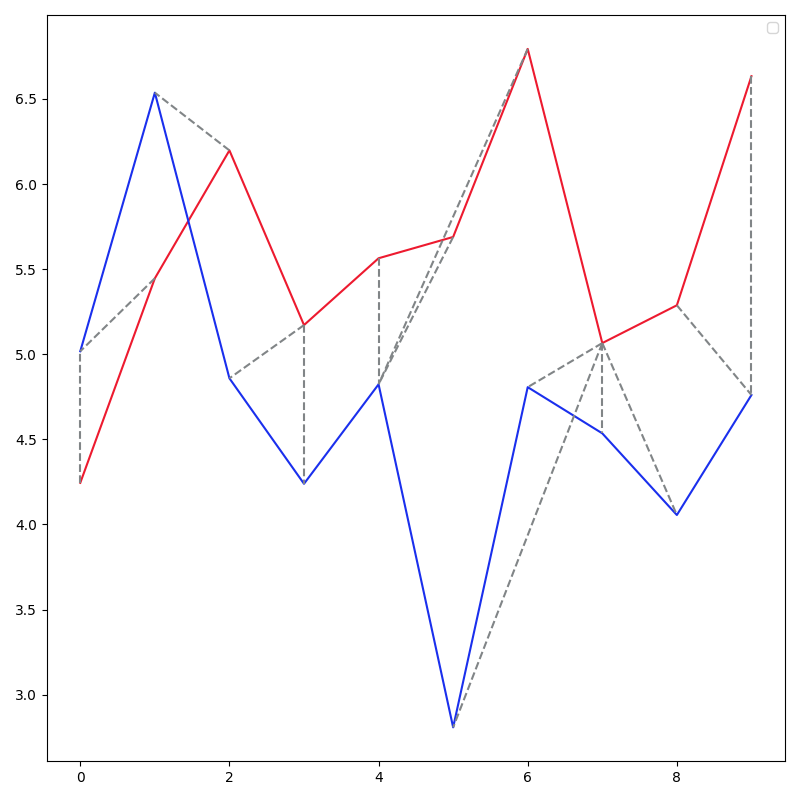}
        \caption{DTW distance.}
    \end{subfigure}
    \caption{Visualisation of optimal alignment paths between two time series for the Euclidean and DTW distance. The dashed grey lines represent the alignment of points from the red time series to the blue time series.}
    \label{fig:alignment-path-visualisation}
\end{figure}

DTW has been used successfully in many applications and has a significant impact on the TSML field. Nevertheless, it has limitations. Firstly, it is not a metric, meaning the triangle inequality does not hold. This limits the potential for acceleration in tasks such as clustering~\cite{elkan03elkankmeans}. Secondly, DTW does not impose an explicit penalty for deviating from the diagonal. This issue is often mitigated by using a fixed warping window to prevent pathologically large warping. However, it has been shown that DTW can still vacillate within the window, leading to degraded performance in tasks such as clustering~\cite{holder24review}. These limitations have been addressed in various ways by introducing penalties for warping: Amerced DTW~\cite{herrmann23adtw} adds a constant penalty for off-diagonal movement, while Move-Split-Merge (MSM)~\cite{stefan13msm} and Time Warp Edit (TWE)~\cite{marteau09twe} are both metrics that explicitly penalise warping. These enhanced elastic distances yield improved performance in tasks such as clustering and serve as primitives for more sophisticated distance-based classification algorithms, such as Proximity Forest~\cite{tan25pf2}.

However, a limitation of all elastic distances is that they are non-differentiable with respect to the time series arguments. This means that small changes in the time series can cause large changes in the optimal warping path. The inability to find a gradient function means that elastic distances cannot be directly used as loss functions in gradient-based machine learning algorithms.

This limitation was addressed for DTW through the introduction of Soft-DTW~\cite{cuturi2017softdtw}.
Soft-DTW modifies the traditional DTW formulation by smoothing the cost function: it replaces the hard minimum in the Bellman recursion with a differentiable soft-minimum operator (defined in Section~\ref{sec:soft-dtw}).
Instead of selecting a single optimal alignment path, Soft-DTW computes a weighted average over all possible paths, assigning exponentially higher weights to those with lower costs.


A soft elastic distance function has a range of possible applications. For example, in neural networks that generate time series data, the gradient vector provides information about how to adjust the network's output to better match target sequences. The alignment matrix is also useful for time series averaging, where the gradient information guides the iterative refinement of a centroid sequence. Additionally, in time series classification tasks with attention mechanisms, the gradient vector can help identify which parts of the sequences are most important for classification decisions.

These observations motivate Soft-MSM: elastic distances other than DTW have been shown to be useful, and the ability to make a gradient matrix for DTW enhances its utility. Making distance functions differentiable increases their scope of application. The Soft-DTW adaptations are specific to DTW and are not immediately adaptable to other elastic distances. Our goal is to take one of the best performing elastic distance measures, MSM, and make it differentiable.

We propose Soft-MSM, a differentiable version of Move-Split-Merge. The main difficulty is the MSM split/merge cost, which depends on the local ordering of three values and is therefore piecewise. We replace this cost with a smooth gated transition function, so that gradients can be propagated through both the dynamic-programming recursion and the context-dependent transition costs. We derive the forward and backward recursions, the resulting soft alignment matrix, and the gradient with respect to the input series. We also analyse differentiability, limiting behaviour, metricity, and a divergence-corrected form of the objective. Experiments on 112 UCR datasets show that Soft-MSM gives lower MSM barycentre loss than Soft-DTW, with corresponding improvements in clustering and nearest-centroid classification. The method is implemented in \texttt{aeon}.


The remainder of this paper is organised as follows. Section~\ref{sec:background} introduces the relevant notation and definitions, and outlines key time series distance functions such as DTW, MSM, and Soft-DTW. It also provides background on time series averaging, which serves as one of the main evaluation benchmarks in this study. Section~\ref{sec:soft-msm} presents the novel differentiable formulation of MSM, describing both the forward and backward recursions used to derive the Soft-MSM alignment matrix and Jacobian transformation. Section~\ref{sec:results} outlines the experimental setup and reports results across multiple learning scenarios. Finally, Section~\ref{sec:conclusions} summarises the main findings and discusses directions for future research.

\section{Background}
\label{sec:background}

A time series is an ordered sequence of \(m\) real-valued observations of a variable, denoted \(\mathbf{x} = (x_1, \dots, x_m)\).
If each observation \(x_i\) is a vector, then \(\mathbf{x}\) is a multivariate time series. The learning tasks we explore involve a collection of $n$ time series, \(\mathcal{X} = \{\mathbf{x}^{(1)}, \mathbf{x}^{(2)}, \dots, \mathbf{x}^{(n)}\}\).  Our experiments are restricted to the case where each \(x_i\) is a scalar and all time series have equal length \(m\). Each individual series \(\mathbf{x}^{(i)} = (x^{(i)}_1, x^{(i)}_2, \dots, x^{(i)}_m)\) represents the \(i\)-th element of the dataset \(\mathcal{X}\). These assumptions are adopted to streamline notation and constrain confounding factors in experimentation. In practice, all distance measures and algorithms discussed here can be readily extended to multivariate and unequal-length series~\citep{shifaz23elastic}.
Moreover, the implementation accompanying this paper supports both multivariate and unequal-length time series~\cite{aeon24jmlr}.

\subsection{Time series distance measures}

A pointwise distance function $\delta(x,y): \mathbb{R} \times \mathbb{R} \to \mathbb{R}$ measures the distance between two scalars.
A pointwise distance matrix between two series, $D(\mathbf{x},\mathbf{y})$, is simply $D_{i,j}= \delta(x_i,y_j)$.
A time series distance function $d$ is a mapping from the domain of the Cartesian product of two $\mathbb{R}^m$ spaces to the codomain of real numbers $\mathbb{R}$,
$d(\mathbf{x},\mathbf{y}): \mathbb{R}^m \times \mathbb{R}^m \to \mathbb{R}$. Distance functions quantify the dissimilarity between two series.




Traditional distance measures, such as Minkowski distances, assume that two time series are perfectly aligned in time.
This assumption often fails in practice, as there may be global or local misalignment between series. To address this limitation, elastic distances compensate for temporal misalignments by allowing local stretching and compression along the time axis. An elastic distance operates by finding an alignment path, a sequence of index pairs specifying which elements of \(\mathbf{x}\) and \(\mathbf{y}\) are aligned:
\[
P = \langle (e_1, f_1), (e_2, f_2), \ldots, (e_s, f_s) \rangle .
\]
The path satisfies the constraints of clamped start and end points,
\[
(e_1, f_1) = (1, 1), \qquad (e_s, f_s) = (m, m),
\]
and monotonic progression
\[
0 \leq e_{t+1} - e_t \leq 1, \qquad
0 \leq f_{t+1} - f_t \leq 1, \qquad
\forall\, t < s.
\]
This alignment path allows points in one series to be matched to non-synchronous points in the other, thereby accommodating local time distortions.
For a given alignment path $P$ under pointwise local costs, let
\begin{equation}
\label{eq:path-cost}
\mathcal{C}(P;x,y)
=
\sum_{(i,j)\in P} \delta(x_i,y_j)
\end{equation}
denote the total cost accumulated along that path. The alignment path that minimises the total accumulated distance between the two time series is denoted by \(P^{*}\).

An alignment path can be represented either as a list of index pairs or equivalently as an alignment matrix \(A \in \mathbb{R}^{m \times m}\).
In the hard (discrete) case, \(A_{i,j} = 1\) if \((i,j) \in P\) and \(0\) otherwise.
Relaxing this binary constraint allows for soft or stochastic alignments, where \(A_{i,j}\) expresses the relative weight or probability that points \(x_i\) and \(y_j\) are aligned.
Figure~\ref{fig:alignment-matrix-example} illustrates a binary alignment matrix and path.

\begin{figure}[htb]
    \centering
    \includegraphics[width=1.0\textwidth]{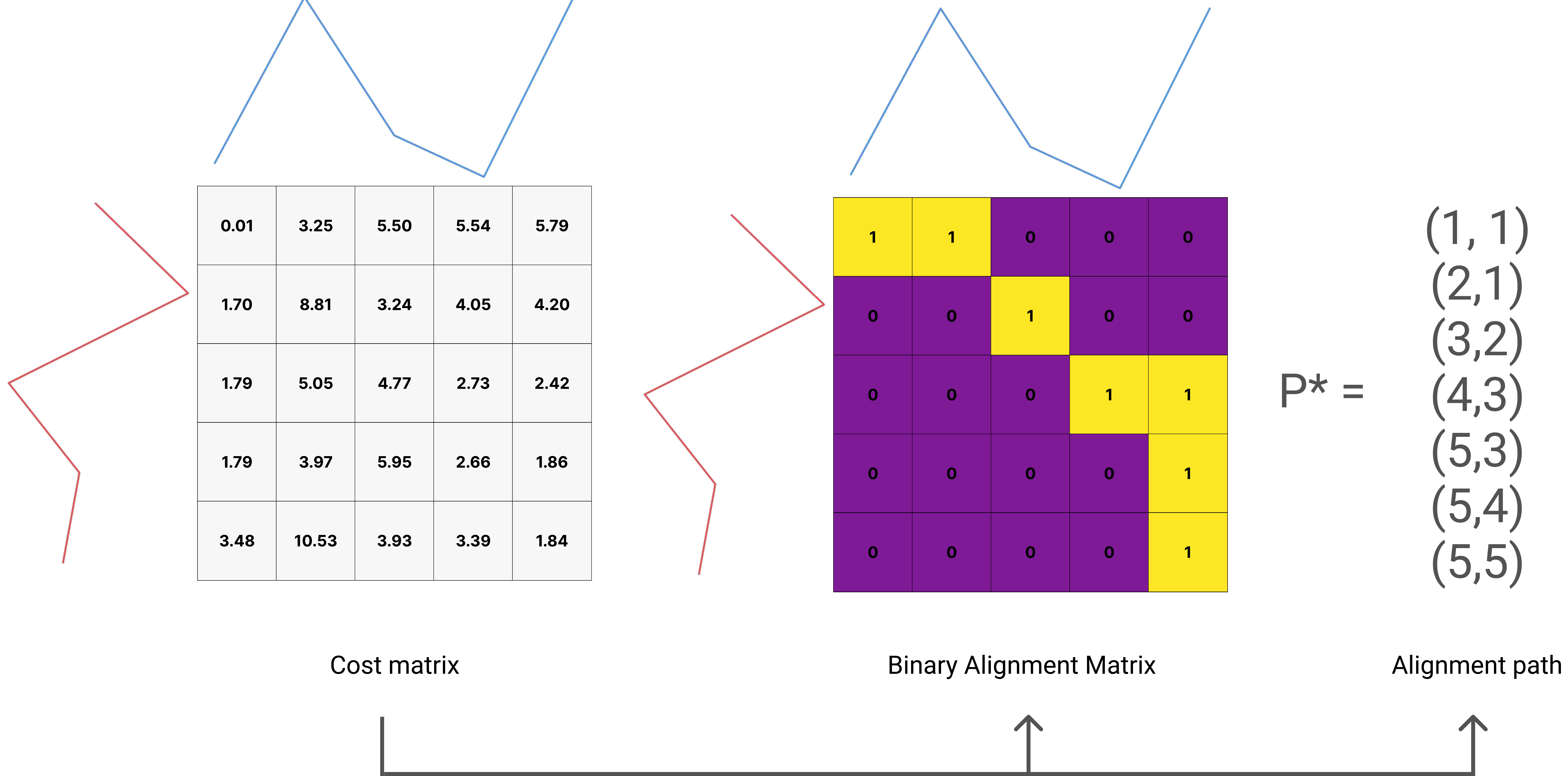}
    \caption{Two different representations of the optimal alignment path through a cost matrix.}
    \label{fig:alignment-matrix-example}
\end{figure}

In practice, elastic distances are computed not by directly enumerating paths in \(D\), but through a recursive optimisation over an accumulated cost matrix \(C \in \mathbb{R}^{m \times m}\).
Each entry \(C_{i,j}\) represents the cost of the best alignment path or, in soft variants, a smooth aggregation over paths, that reaches index \((i,j)\).
This recursion integrates both the local distance \(D_{i,j}\) and a penalty for deviating from the diagonal, which discourages excessive warping.
The total alignment cost between the two series is then obtained from the final index, \(C_{m,m}\).
Different elastic distance measures vary in how this recursion and off-diagonal penalty are defined, but they all share this general dynamic programming formulation.

\subsection{Dynamic Time Warping (DTW)~\cite{berndt94dtw}}

Dynamic Time Warping (DTW) is the best known elastic distance for time series machine learning~\citep{berndt94dtw} and has been employed in thousands of publications. DTW uses dynamic programming to identify the optimal alignment path through a cost matrix \(C\) that minimises the cumulative distance between two time series.
To begin, we initialise a cost matrix $C \in \mathbb{R}^{m \times m}$ with
$C_{1,1} = (x_{1} - y_{1})^{2}$. The boundary conditions are then defined as:
\[
C_{i,1} = (x_{i} - y_{1})^{2} + C_{i-1,1}, \qquad
C_{1,j} = (x_{1} - y_{j})^{2} + C_{1,j-1}.
\]
and then the cost matrix is found recursively for $i\ge 2$ and $j\ge 2$:
\begin{equation}
    C_{i,j} = (x_i - y_j)^2 + \min \begin{cases}
        C_{i-1,\; j-1} \\
        C_{i-1,\; j} \\
        C_{i,\;j-1}
    \end{cases}
    \label{eq:dtw-cm}
\end{equation}

The DTW distance is then the last value in the cost matrix $C_{m,m}$ and the alignment path can be found by backtracking through $C$. DTW does not explicitly impose a penalty for deviating from the diagonal; instead, it accumulates an implicit penalty by increasing the total path length. DTW is not a metric nor is it differentiable.

\subsection{Move-Split-Merge (MSM)~\cite{stefan13msm}}
\label{sec:msm}

The Move-Split-Merge (MSM) distance is an elastic measure that extends the dynamic programming framework of DTW by introducing context-dependent transition costs.
While DTW accumulates only pointwise distances, MSM adds an explicit penalty for transitions that move off the diagonal, thereby modelling insertions and deletions in a more interpretable way.

MSM differs from DTW in two key aspects.
First, it replaces the squared difference with the absolute difference between points.
Second, and more importantly, it introduces a context-aware transition cost function, $\mathrm{msm\_cost}$, defined in Equation~\ref{eq:msm-cost}.
This function governs the cost of off-diagonal transitions and captures how the alignment between neighbouring points should behave.

The $\mathrm{msm\_cost}$ function quantifies how expensive it is to add or remove a value \(x\) between its neighbours \(y\) and \(z\).
If \(x\) lies between \(y\) and \(z\), a fixed penalty \(c\) is applied.
Otherwise, an additional penalty proportional to the deviation is added. This transition cost function is defined as:
\begin{equation}
\mathrm{msm\_cost}(x,y,z) =
\begin{cases}
c, & \text{if } y \leq x \leq z \text{ or } y \geq x \geq z, \\
c + \min(|x - y|, |x - z|), & \text{otherwise.}
\end{cases}
\label{eq:msm-cost}
\end{equation}

To compute the MSM distance, we first initialise a cost matrix $C \in \mathbb{R}^{m \times m}$ with
$C_{1,1} = |x_{1} - y_{1}|$. The boundary conditions are then defined as:
\[
C_{i,1} = \mathrm{msm\_cost}(x_i, x_{i-1}, y_1) + C_{i-1,1}, \qquad
C_{1,j} = \mathrm{msm\_cost}(y_j, y_{j-1}, x_1) + C_{1,j-1}.
\]
and then the MSM cost matrix is found recursively for $i\ge 2$ and $j\ge 2$:
\begin{equation}
C_{i,j} = \min\begin{cases}
|x_i - y_j| + C_{i-1,\;j-1}\\
\mathrm{msm\_cost}(x_i, x_{i-1}, y_j) + C_{i-1,\;j} \\
\mathrm{msm\_cost}(y_j, y_{j-1}, x_i) + C_{i,\;j-1}
\end{cases}
\label{eq:msm-cm}
\end{equation}
The MSM distance is then the last value in the cost matrix $C_{m,m}$. The first term corresponds to a diagonal ``match'' operation, while the second and third represent vertical and horizontal transitions corresponding to ``split'' and ``merge'' operations, respectively. MSM has been shown to perform effectively across a wide range of time series learning tasks~\cite{holder24review, holder23mba, lines15elastic, bagnall17bakeoff}.
It was also proven to satisfy the properties of a metric~\cite{stefan13msm}, although, like DTW, it remains non-differentiable.

\subsection{Soft-DTW~\cite{cuturi2017softdtw}}
\label{sec:soft-dtw}

Soft-DTW~\cite{cuturi2017softdtw} is a reformulation of DTW obtained by smoothing its Bellman recursion, making the objective differentiable for any temperature parameter $\gamma>0$.
This modification enables its use as a loss function in gradient-based learning while preserving the alignment-based geometry of DTW. To achieve this, the hard minimum in the DTW recursion is replaced with a smooth minimum operator defined as:
\begin{equation}
\label{eq:softmin}
\operatorname{softmin}_\gamma(a_1,\dots,a_k)
  = -\,\gamma \log \!\left( \sum_{r=1}^{k} e^{-(a_r - s)/\gamma} \right)
    + s,
  \qquad s = \min_{1 \le r \le k} a_r, \;\; \gamma > 0.
\end{equation}

$s$ is a stabilising constant used for numerical stability.  This converges to $\min\{a_1,\ldots,a_k\}$ as $\gamma \to 0^{+}$.
By replacing the hard minimum with the soft minimum operator, the DTW forward recursion in Equation~\ref{eq:dtw-cm} becomes
\begin{equation}
\label{eq:soft-dtw-cm-clean}
C_{i,j}
=
D_{i,j}
+
\operatorname{softmin}_\gamma\!\begin{cases}
        C_{i-1,\;j-1}\\
        C_{i-1,\;j}\\
        C_{i,\;j-1}
    \end{cases}
\end{equation}
As $\gamma \rightarrow 0$, Soft-DTW becomes the DTW recursion. To find the gradient of Soft-DTW with respect to changes in the values of the input series, $\mathbf{x} = (x_1,\ldots,x_m)$, the first step is to calculate the alignment matrix  $A\in\mathbb{R}^{m \times m}$, where $A_{i,j}$ expresses the relative weight of aligning $x_i$ and $y_j$ over all possible paths. The matrix $A$ is found through a backward recursion. It is initialised such that $A_{m,m}=1$ and $A_{i,j}=0$ elsewhere, then the weight of cell $(i,j)$ is based on weights implied by the forward cost function,
\[
\begin{aligned}
w_{i-1,j-1} &= \exp\!\Big(\tfrac{\operatorname{softmin}_\gamma(C_{i-1,j-1},C_{i-1,j},C_{i,j-1}) - C_{i-1,j-1}}{\gamma}\Big),\\
w_{i-1,j}   &= \exp\!\Big(\tfrac{\operatorname{softmin}_\gamma(C_{i-1,j-1},C_{i-1,j},C_{i,j-1}) - C_{i-1,j}}{\gamma}\Big),\\
w_{i,  j-1} &= \exp\!\Big(\tfrac{\operatorname{softmin}_\gamma(C_{i-1,j-1},C_{i-1,j},C_{i,j-1}) - C_{i,  j-1}}{\gamma}\Big).
\end{aligned}
\]
and the alignment matrix is updated as
\[
A_{i-1,j-1}\mathrel{+}=A_{i,j}\,w_{i-1,j-1},\qquad
A_{i-1,j}\mathrel{+}=A_{i,j}\,w_{i-1,j},\qquad
A_{i,j-1}\mathrel{+}=A_{i,j}\,w_{i,j-1}.
\]
The resulting matrix $A$ provides a soft alignment between the two time series. Indices along low-cost routes receive higher weights, and as $\gamma\to 0^{+}$ these weights collapse onto a single hard path corresponding to the standard DTW alignment.

While the alignment matrix is useful for analysing path contributions, it also represents the gradient of the Soft-DTW objective with respect to the pointwise distance matrix $D$, in this case the squared Euclidean distance.  For most downstream applications, such as time series averaging, clustering, or model training, the goal is to obtain the gradient with respect to the time series itself.
This can be achieved by performing a Jacobian decomposition of the alignment matrix.

Let $J=\partial D/\partial x$ denote the Jacobian of the local costs with respect to $x$. For Euclidean distance,
\[
\frac{\partial D_{i,j}}{\partial x_k}
=
\begin{cases}
2(x_i-y_j), & k=i,\\
0, & k\ne i.
\end{cases}
\]
Since $\mathrm{\text{Soft-DTW}}_\gamma(\mathbf{x},\mathbf{y})=C_{m,m}$, the gradient with respect to $\mathbf{x}$ is
\[
\nabla_{\!\mathbf{x}}\,\mathrm{\text{Soft-DTW}}_\gamma(\mathbf{x},\mathbf{y})
=
J^\top A,
\qquad
\frac{\partial\,\mathrm{\text{Soft-DTW}}_\gamma(\mathbf{x},\mathbf{y})}{\partial x_i}
=
\sum_{j=1}^m 2(x_i-y_j)\,A_{i,j}.
\]
This closed-form expression enables direct use of Soft-DTW in gradient-based optimisation for tasks such as averaging, clustering, and parameter learning.

Although not the primary focus of this paper, recent work~\cite{blondel21softdtwdivergence} has highlighted several undesirable properties of Soft-DTW when used as a loss function, and proposed a remedy in the form of a divergence formulation. In its original form, Soft-DTW may yield negative values under certain local cost functions and does not necessarily evaluate to zero when $x = y$. These characteristics are undesirable for a loss function, as they compromise its interpretability as a dissimilarity measure.  The soft-DTW divergence~\cite{blondel21softdtwdivergence} addresses these issues by removing the self-similarity bias:
\[
D_\gamma(x,y)
=
\mathrm{\text{Soft-DTW}}_\gamma(x,y)
-\tfrac{1}{2}\,\mathrm{\text{Soft-DTW}}_\gamma(x,x)
-\tfrac{1}{2}\,\mathrm{\text{Soft-DTW}}_\gamma(y,y).
\]
Soft-DTW divergence is non-negative and equals zero when $x=y$~\cite{blondel21softdtwdivergence}.


\subsection{Time series averaging}
\label{sec:time-series-averaging}

The objective of time series averaging is to construct a representative time series that lies at the centre of a given set of time series~\cite{schultz18ssgba}.
The simplest method is to compute the arithmetic mean at each time point, which minimises the sum of squared Euclidean distances to all series in the collection.

However, like Minkowski distances, the arithmetic mean assumes that the time series are perfectly aligned.
It therefore fails to account for temporal misalignments, leading to a poor average.
Given the success of elastic distances in aligning time series, several methods have been proposed to incorporate alignment directly into the averaging process.

\subsubsection{Elastic Barycentre Averaging}

To integrate temporal alignment into averaging, the problem can be reformulated as an optimisation task:
finding the sequence $\boldsymbol{\beta}$ that minimises the Fr\'echet function~\citep{frechet48frechetfunction} under a chosen distance measure $d$.
Formally,
\begin{equation}
\label{eq:frechet}
F(\boldsymbol{\beta}) \;=\; \frac{1}{n} \sum_{i=1}^{n} d\big(\boldsymbol{\beta}, \mathbf{x}^{(i)}\big)^{2},
\end{equation}
where $\boldsymbol{\beta}$ is the candidate barycentre, and $d$ is a time series distance function.
The goal is then to find the optimal barycentre
\begin{equation}
\label{eq:frechet-argmin}
\boldsymbol{\beta}^{\ast} \;=\; \arg\min_{\boldsymbol{\beta}} F(\boldsymbol{\beta}).
\end{equation}
This barycentre generalises the notion of an arithmetic mean to elastic distance spaces, providing a representative time series that aligns with temporal variability across the dataset.

The first method to successfully incorporate alignment into the averaging process was DTW Barycentre Averaging (DBA)~\cite{petitjean11dba}.
DBA minimises the Fréchet function using an iterative heuristic based on pairwise DTW alignments.
Starting from an initial average (typically the arithmetic mean), each iteration aligns all series in the collection to the current average, collects values associated with each aligned position, and updates the average by taking the arithmetic mean of these aligned values.

In the original formulation by~\cite{petitjean11dba}, the Fréchet function was minimised specifically for DTW.
This idea was later generalised in the Elastic Barycentre Average (EBA)~\cite{holder23mba}, which relaxed the restriction to DTW and allowed the use of any elastic distance that computes a complete alignment path.
Employing alternatives to DTW has been shown to yield improved performance across multiple learning tasks.
For example, using the MSM and Shape-DTW~\cite{zhao18shapedtw} distances led to the MSM Barycentre Average (MBA)~\cite{holder23mba} and Shape-DTW Barycentre Average (Shape-DBA)~\cite{ali23shapedba}, both of which significantly outperform DBA and achieve state-of-the-art performance for clustering.

While effective, DBA and its extensions are computationally expensive, often requiring many refinement iterations to converge.
To address this,~\cite{schultz18ssgba} proposed the Stochastic Subgradient Dynamic Time Warping Barycentre Average (SSG-DBA), a subgradient-based optimisation strategy that achieves similar-quality results while substantially reducing runtime.

A further approach to minimising the Fréchet function is to use a gradient-based optimisation method.
However, this requires both the distance measure and the associated cost matrix to be differentiable.
Using Soft-DTW,~\cite{cuturi2017softdtw} introduced Soft-DBA, which integrates all possible alignment paths through a smooth differentiable formulation.

Soft-DBA computes the Soft-DTW distance and its Jacobian (as outlined in Section~\ref{sec:soft-dtw}) between a candidate average and each time series in the collection.
The resulting gradients are then used with the Limited-memory Broyden–Fletcher–Goldfarb–Shanno with Bound constraints (L-BFGS-B) optimisation algorithm~\cite{zhu97algorithm}.
L-BFGS-B begins from an initial average time series and iteratively refines it using gradient information to minimise the overall Soft-DTW loss across all series.
At each iteration, it determines an update direction and step size that jointly reduce the total distance.
The process continues until convergence, typically when the change in total Soft-DTW loss between consecutive iterations falls below a defined tolerance threshold.
The final average time series is then returned as the Soft-DTW barycentre.

When compared to DBA and SSG-DBA, Soft-DBA significantly outperforms both in averaging and clustering tasks~\cite{holder24tsclwithkmeans}. This improvement is likely attributable to Soft-DTW’s ability to compute an exact gradient, whereas DBA and SSG-DBA rely on gradient estimates. However, despite its advantages over other DBA variants, Soft-DBA still performs worse than alternative elastic measures such as MBA, and Shape-DBA for the same tasks~\cite{holder24kasba}. We therefore hypothesise that if these other elastic measures could be made differentiable, substantial performance gains could be achieved.

\section{Soft-MSM}
\label{sec:soft-msm}

Before presenting the Soft-MSM recursion, Table~\ref{tab:soft-msm-notation} summarises the main notation used throughout this section.
\begin{table}[!htb]
\centering
\caption{Core notation used in the Soft-MSM formulation.}
\label{tab:soft-msm-notation}
\begin{tabular}{ll}
\toprule
Symbol & Meaning \\
\midrule
\(x, y\) & Input time series of length \(m\) \\
\(c\) & MSM split/merge penalty parameter \\
\(\gamma\) & Soft-minimum temperature parameter \\
\(\epsilon\) & Numerical stabiliser used in the smooth gate \\
\(C\) & Accumulated Soft-MSM cost matrix \\
\(F_\gamma(x,y)\) & Soft-MSM objective, equal to \(C_{m,m}\) \\
\(A\) & Soft alignment matrix from the backward recursion \\
\(g(x,y,z)\) & Smooth between-gate used in the transition cost \\
\(\operatorname{trans}_\gamma(x,y,z;c)\) & Differentiable MSM transition cost \\
\(G_{i,j}\) & Weight associated with a diagonal match edge \\
\(V_{i,j}\) & Weight associated with a vertical transition edge \\
\(H_{i,j}\) & Weight associated with a horizontal transition edge \\
\bottomrule
\end{tabular}
\end{table}

Soft-MSM is a differentiable alignment-based loss that preserves the MSM property of an explicit penalty for off-diagonal moves. DTW was made differentiable by replacing the hard minimum in the Bellman recursion with a smooth minimum operator~\citep{cuturi2017softdtw}. It is more complex for MSM, since the MSM local transition cost in Equation~\ref{eq:msm-cost} is piecewise, with switching conditions \(y \le x \le z\) or \(y \ge x \ge z\). Crossing these boundaries produces non-smooth regions in the cost matrix, resulting in undefined gradients at transition points. There are three sources of non-differentiability that we address with Soft-MSM:\\

\noindent \textbf{1. Differentiable local distance}

MSM employs absolute differences (Equation~\ref{eq:msm-cost}), whose derivative is undefined at \(x = y\) since the one-sided derivatives are \(-1\) and \(+1\). Consequently, the overall dynamic-programming objective is non-differentiable whenever any aligned pair coincides.
We replace the absolute distance with the squared distance because it is continuously differentiable and provides smooth gradients everywhere.\\

\noindent \textbf{2. Differentiable minimum in $C$}

The cost function hard minimum in the MSM recursion (Equation~\ref{eq:msm-cm}) can be replaced with the soft minimum operator (Equation~\ref{eq:softmin}) in a manner identical to Soft-DTW.\\

\noindent \textbf{3. Differentiable cost function}
\label{sec:soft-msm-cost}

A central component of Soft-MSM is a smooth reformulation of the MSM
transition cost. The original MSM cost in Equation~\ref{eq:msm-cost} is
piecewise: it applies a fixed penalty when \(x\) lies between \(y\) and
\(z\), and otherwise adds the smaller deviation from \(y\) or \(z\). We
replace this piecewise rule with a differentiable surrogate.

Let \(a=x-y\), \(b=x-z\), and \(u=ab\). We define the smooth between-gate
\begin{equation}
\label{eq:between-gate}
g(x,y,z)
=
\tfrac{1}{2}\left(1-\frac{u}{\sqrt{u^2+\varepsilon}}\right),
\qquad \varepsilon>0.
\end{equation}
When \(x\) lies between \(y\) and \(z\), the quantities \(a\) and \(b\)
have opposite signs, so \(u<0\) and \(g(x,y,z)\approx 1\). Otherwise,
\(u>0\) and \(g(x,y,z)\approx 0\). The boundary cases \(x=y\) or \(x=z\)
give \(u=0\), for which the smooth gate takes the intermediate value
\(g=1/2\). In all experiments and implementations, we fix
\(\varepsilon=10^{-12}\) as a numerical stabiliser.

Using squared deviations, we define the Soft-MSM transition cost as
\begin{equation}
\label{eq:soft-msm-trans}
\mathrm{trans}_\gamma(x,y,z;c)
=
c
+
\bigl(1-g(x,y,z)\bigr)
\mathrm{softmin}_\gamma\bigl((x-y)^2,(x-z)^2\bigr).
\end{equation}
Thus, when \(x\) lies between \(y\) and \(z\), the transition cost is close
to the fixed MSM penalty \(c\). Otherwise, the cost adds a smooth analogue
of \(\min\{(x-y)^2,(x-z)^2\}\). Algorithm~\ref{alg:soft-msm-trans}
summarises the computation used in our implementation.

\begin{algorithm}[htb]
\DontPrintSemicolon
\LinesNumbered
\caption{$\mathrm{trans}_\gamma(x,y,z;c)$: Soft-MSM transition cost with smooth between-gate}
\label{alg:soft-msm-trans}
\KwIn{scalars $x,y,z$; cost $c>0$; temperature $\gamma>0$}
\KwOut{$t \in \mathbb{R}$}
\BlankLine
$\varepsilon \leftarrow 10^{-12}$ \tcp*{fixed numerical stabiliser}

$a \leftarrow x - y$\;
$b \leftarrow x - z$\;
$u \leftarrow a \cdot b$\;

$g \leftarrow \tfrac{1}{2}\!\left(1 - \dfrac{u}{\sqrt{u^2 + \varepsilon}}\right)$\;

$s \leftarrow \mathrm{softmin}_\gamma\!\bigl((x-y)^2,\,(x-z)^2\bigr)$\;

$t \leftarrow c + (1 - g)\cdot s$\;

\Return $t$
\end{algorithm}

To compute the Soft-MSM distance using this function, a cost matrix \(C \in \mathbb{R}^{m \times m}\) is initialised with
\(C_{1,1} = (x_{1} - y_{1})^2\). The boundary conditions are then defined as:
\[
C_{i,1} = \mathrm{trans}_\gamma(x_i, x_{i-1}, y_1;c) + C_{i-1,1}, \qquad
C_{1,j} = \mathrm{trans}_\gamma(y_j, y_{j-1}, x_1;c) + C_{1,j-1}.
\]

The remaining entries of the Soft-MSM cost matrix are computed on the forward pass for \(i \ge 2\) and \(j \ge 2\) according to:
\begin{equation}
C_{i,j} = \mathrm{softmin}_\gamma\!\begin{cases}
(x_i - y_j)^2 + C_{i-1,\;j-1} \\
\mathrm{trans}_\gamma(x_i, x_{i-1}, y_j;c) + C_{i-1,\;j} \\
\mathrm{trans}_\gamma(y_j, y_{j-1}, x_i;c) + C_{i,\;j-1}
\end{cases}
\label{eq:soft-msm-cm}
\end{equation}

 As \(\gamma \to 0^{+}\), the \(\mathrm{softmin}_\gamma\) operator converges
to the pointwise minimum. Away from the boundary cases \(x=y\) and \(x=z\),
as \(\varepsilon \to 0^{+}\), the smooth gate in
Equation~\ref{eq:between-gate} tends to \(1\) when \(x\) lies between \(y\)
and \(z\), and to \(0\) otherwise. Thus Equation~\ref{eq:soft-msm-cm}
approaches a hard-min MSM-style recursion with the same move/split/merge
structure as MSM, but with squared rather than absolute local costs. We
formalise this limiting behaviour and differentiability in
Section~\ref{sec:theory}.

To differentiate Soft-MSM with respect to an input series, we define the
backward pass, which yields both the alignment matrix and the Jacobian.

\subsection{Soft-MSM alignment matrix}
\label{sec:softmsm-alignment}

Finding $A$ is more complex than with Soft-DTW, since the diagonal components contribute different weights than the off-diagonal. The algorithm for the backward recursion to find the alignment values is described in Algorithm~\ref{alg:soft-msm-backward}.

Each value $A_{i,j}$ is a weighted average of the alignment of the three positions that can be reached from $(i,j)$. Weights are the derivative of the softmin operator appearing in the forward recursion. To find these, we find the cost of the move between $(i,j)$ and one of $(i',j') \in \{(i+1,j), (i,j+1), (i+1,j+1)\}$.

In the backward pass we propagate the gradient
\[
A_{i,j} \;=\; \frac{\partial \mathcal{F_\gamma}}{\partial C_{i,j}},
\]
and for each successor $(i', j')$ we use the chain rule to accumulate contributions
\[
A_{i,j} \;\leftarrow\; A_{i,j}
\;+\;
A_{i',j'} \,
\frac{\partial C_{i',j'}}{\partial C_{i,j}}.
\]
The partial derivative is
\[
\frac{\partial C_{i',j'}}{\partial C_{i,j}}
=
\exp\!\left(
\frac{C_{i',j'} - \bigl(C_{i,j} + \tau\bigr)}{\gamma}
\right).
\]
Here $\tau \in \{\tau_d, \tau_h, \tau_v\}$ denotes the transition cost for diagonal, horizontal, and vertical moves, respectively.
For the diagonal move,
\[
\tau_d = (x_{i+1}-y_{j+1})^2.
\]
For the off-diagonal terms $\tau_h$ and $\tau_v$, we recompute the transition costs using the $\mathrm{trans}_\gamma$ function (see lines~9 and~12 of Algorithm~\ref{alg:soft-msm-backward}).
The differential of the soft-min operation then yields the weights:

\begin{align}
\label{eq:softmsm-backward}
A_{i,j}
&=
A_{i+1,j}\,
\exp\!\left(\frac{C_{i+1,j} - \bigl(C_{i,j} + \tau_v\bigr)}{\gamma}\right)
\;+\;
A_{i,j+1}\,
\exp\!\Big(\tfrac{C_{i,j+1} - \bigl(C_{i,j} + \tau_h)}{\gamma}\Big) \notag\\
&\quad+\;
A_{i+1,j+1}\,
\exp\!\Big(\tfrac{C_{i+1,j+1} - \bigl(C_{i,j} + \tau_d)}{\gamma}\Big).
\end{align}

The resulting matrix $A$ provides a soft alignment between the two time series: entries along low-cost routes receive higher weights, and as $\gamma \to 0^{+}$ these weights collapse to a single hard path corresponding to the classical MSM alignment. $A$ represents the derivative of the objective with respect to the accumulated cost.

\begin{algorithm}[H]
\DontPrintSemicolon
\LinesNumbered
\caption{$\mathrm{alignment\_matrix}_\gamma(x,y,c,C)$}
\label{alg:soft-msm-backward}
\KwIn{$x,y$; cost $c>0$; temperature $\gamma>0$; cost matrix $C$ from Eq.~\ref{eq:soft-msm-cm}}
\KwOut{$A\in\mathbb{R}^{m\times m}$}
\BlankLine

Initialise $A_{i,j}\leftarrow 0$ for all $(i,j)$\;
$A_{m,m}\leftarrow 1$\;

\For{$i \leftarrow m$ \KwTo $1$}{
  \For{$j \leftarrow m$ \KwTo $1$}{
    \If{$(i,j)=(m,m)$}{\textbf{continue}}
    $w \leftarrow 0$\;

    \If{$i+1\le m$}{
      $\tau_v \leftarrow \mathrm{trans}_\gamma(x_{i+1},x_i,y_j;c)$\;
      $w \leftarrow w + A_{i+1,j}\cdot \exp\!\Big(\tfrac{C_{i+1,j}-(C_{i,j}+\tau_v)}{\gamma}\Big)$\;
    }

    \If{$j+1\le m$}{
      $\tau_h \leftarrow \mathrm{trans}_\gamma(y_{j+1},y_j,x_i;c)$\;
      $w \leftarrow w + A_{i,j+1}\cdot \exp\!\Big(\tfrac{C_{i,j+1}-(C_{i,j}+\tau_h)}{\gamma}\Big)$\;
    }

    \If{$i+1\le m$ \textbf{and} $j+1\le m$}{
      $\tau_d \leftarrow (x_{i+1}-y_{j+1})^2$\;
      $w \leftarrow w + A_{i+1,j+1}\cdot \exp\!\Big(\tfrac{C_{i+1,j+1}-(C_{i,j}+\tau_d)}{\gamma}\Big)$\;
    }

    $A_{i,j}\leftarrow w$\;
  }
}
\Return $A$\;
\end{algorithm}

\subsection{Soft-MSM Jacobian}
\label{sec:softmsm-jacobian}

For most learning tasks we require derivatives with respect to the input time series itself.  These can be obtained by applying a Jacobian transformation to the alignment matrix.
We first derive the partial derivatives of the differentiable transition cost (Equation~\ref{eq:soft-msm-trans}) with respect to its scalar inputs $(x,y,z)$.
Recall that the smooth transition cost depends on the soft minimum
\(s(x,y,z)=\mathrm{softmin}_\gamma((x-y)^2,(x-z)^2)\).
Using the product rule, the partial derivatives are:

\begin{align}
\label{eq:softmsm-trans-grad-x}
\frac{\partial\,\mathrm{trans}_\gamma}{\partial x}
&= -\frac{\partial g}{\partial x}\, s
  + (1-g)\,\frac{\partial s}{\partial x}, \\[6pt]
\label{eq:softmsm-trans-grad-y}
\frac{\partial\,\mathrm{trans}_\gamma}{\partial y}
&= -\frac{\partial g}{\partial y}\, s
  + (1-g)\,\frac{\partial s}{\partial y}, \\[6pt]
\label{eq:softmsm-trans-grad-z}
\frac{\partial\,\mathrm{trans}_\gamma}{\partial z}
&= -\frac{\partial g}{\partial z}\, s
  + (1-g)\,\frac{\partial s}{\partial z}.
\end{align}

\vspace{4pt}
Equations~\ref{eq:softmsm-trans-grad-x},~\ref{eq:softmsm-trans-grad-y}, and~\ref{eq:softmsm-trans-grad-z} are implemented in Algorithm~\ref{alg:soft-msm-trans-grads}.

\begin{algorithm}[!htb]
\DontPrintSemicolon
\LinesNumbered
\caption{$\mathrm{trans\_grads}_\gamma(x,y,z;c)$}
\label{alg:soft-msm-trans-grads}
\KwIn{scalars $x,y,z$; cost $c>0$; temperature $\gamma>0$}
\KwOut{$(d_x, d_y, d_z)$ partial derivatives of $\mathrm{trans}_\gamma$ w.r.t.\ $(x,y,z)$}
\BlankLine
$\varepsilon \leftarrow 10^{-12}$ \tcp*{fixed numerical stabiliser}

$a \leftarrow x - y$\;
$b \leftarrow x - z$\;
$u \leftarrow a \cdot b$\;

$r \leftarrow \sqrt{u^2 + \varepsilon}$\;
$q \leftarrow 1/r$\;
$q_3 \leftarrow q^3$\;

$g \leftarrow \tfrac{1}{2}\!\left(1 - u\,q\right)$\;
$s \leftarrow \mathrm{softmin}_\gamma\!\bigl((x-y)^2,\,(x-z)^2\bigr)$\;

\BlankLine
\textit{// Gate derivatives:}\;
$\frac{\partial g}{\partial x} \leftarrow -\tfrac{1}{2}\!\left[(a+b)\,q - u^2(a+b)\,q_3\right]$\;
$\frac{\partial g}{\partial y} \leftarrow \phantom{-}\tfrac{1}{2}\!\left[b\,q - u^2 b\,q_3\right]$\;
$\frac{\partial g}{\partial z} \leftarrow \phantom{-}\tfrac{1}{2}\!\left[a\,q - u^2 a\,q_3\right]$\;

\BlankLine
\textit{// Softmin weights for $s$ (two-argument softmin, stabilised):}\;
$d_1 \leftarrow (x-y)^2$\;
$d_2 \leftarrow (x-z)^2$\;
$s_0 \leftarrow \min(d_1,d_2)$\;

$e_1 \leftarrow \exp(-(d_1 - s_0)/\gamma)$\;
$e_2 \leftarrow \exp(-(d_2 - s_0)/\gamma)$\;

$\pi_1 \leftarrow \frac{e_1}{e_1 + e_2}$\;
$\pi_2 \leftarrow 1 - \pi_1$\;

$\frac{\partial s}{\partial x} \leftarrow 2\big[\pi_1(x-y) + \pi_2(x-z)\big]$\;
$\frac{\partial s}{\partial y} \leftarrow -2\pi_1(x-y)$\;
$\frac{\partial s}{\partial z} \leftarrow -2\pi_2(x-z)$\;

\BlankLine
\textit{// Combine by product rule:}\;
$d_x \leftarrow -(\tfrac{\partial g}{\partial x})\, s + (1-g)\,(\tfrac{\partial s}{\partial x})$\;
$d_y \leftarrow -(\tfrac{\partial g}{\partial y})\, s + (1-g)\,(\tfrac{\partial s}{\partial y})$\;
$d_z \leftarrow -(\tfrac{\partial g}{\partial z})\, s + (1-g)\,(\tfrac{\partial s}{\partial z})$\;

\Return $(d_x, d_y, d_z)$
\end{algorithm}

With the derivatives of the transition cost defined, we can now compute the derivative of the overall Soft-MSM objective with respect to the time series $x$.
Let $F_\gamma(x, y) = C_{m,m}$ denote the Soft-MSM objective obtained from the forward recursion in Equation~\ref{eq:soft-msm-cm}.

Because each entry of the cost matrix $C$ depends recursively on the local transition costs,
the total derivative of $F_\gamma$ can be expressed as a weighted sum of these local contributions.
To accumulate these derivatives efficiently, we reuse the alignment matrix $A$ computed in Algorithm~\ref{alg:soft-msm-backward},
which encodes the expected occupancy of each alignment node.
By combining $A$ with the transition derivatives derived above,
we obtain the Jacobian of $F_\gamma$ with respect to $x$.
As in Soft-DTW, the gradient naturally decomposes into three edge types: match, vertical, and horizontal,
corresponding to the local operations in the Soft-MSM recursion.
\begin{align}
\intertext{\textit{Match edges (diagonal moves):}}
\label{eq:softmsm-g}
G_{i,j}
&=
A_{i,j}\cdot
\exp\!\Big(\tfrac{C_{i,j}-(C_{i-1,j-1}+(x_i-y_j)^2)}{\gamma}\Big),
\quad i,j\ge 2,\\[6pt]
\intertext{\textit{Vertical edges (split/merge along }$x$\textit{):}}
\label{eq:softmsm-v}
V_{i,j}
&=
A_{i,j}\cdot
\exp\!\Big(\tfrac{C_{i,j}-(C_{i-1,j}+\mathrm{trans}_\gamma(x_i,x_{i-1},y_j;c))}{\gamma}\Big),
\quad i\ge 2,\\[6pt]
\intertext{\textit{Horizontal edges (split/merge along }$y$\textit{):}}
\label{eq:softmsm-h}
H_{i,j}
&=
A_{i,j}\cdot
\exp\!\Big(\tfrac{C_{i,j}-(C_{i,j-1}+\mathrm{trans}_\gamma(y_j,y_{j-1},x_i;c))}{\gamma}\Big),
\quad j\ge 2.
\end{align}

Finally, using Equations~\ref{eq:softmsm-g},~\ref{eq:softmsm-v}, and~\ref{eq:softmsm-h} together with the local definitions in Equation~\ref{eq:soft-msm-trans},
the gradient with respect to $x_i$ can be expressed as:
\begin{align}
\label{eq:softmsm-grad}
\frac{\partial F_\gamma}{\partial x_i}
&=
\underbrace{\sum_{j=1}^{m} 2(x_i-y_j)\,G_{i,j}}_{\text{match edges}}
\;+\;
\underbrace{\sum_{j=2}^{m}
  H_{i,j}\,\partial_{x_i}\mathrm{trans}_\gamma(y_j,y_{j-1},x_i;c)
}_{\text{horizontal edges}} \notag \\[4pt]
&\quad+\;
\underbrace{
\sum_{j=1}^{m}\!\Big[
  V_{i,j}\,\partial_{x_i}\mathrm{trans}_\gamma(x_i,x_{i-1},y_j;c)
 +V_{i+1,j}\,\partial_{x_i}\mathrm{trans}_\gamma(x_{i+1},x_i,y_j;c)
\Big]
}_{\text{vertical edges}}.
\end{align}

The $i$th element of the gradient $\frac{\partial F_\gamma}{\partial \mathbf{x}}$ quantifies the influence of element $x_i$ on the total alignment cost. This gradient can then be used in downstream learning tasks such as averaging, clustering, or classification. An implementation of Soft-MSM and the associated gradient function are available in \texttt{aeon}\footnote{\url{https://github.com/aeon-toolkit/aeon}} and further examples are provided on the associated repository\footnote{\url{https://github.com/time-series-machine-learning/soft-msm}}.

\subsection{Theoretical Properties of Soft-MSM}
\label{sec:theory}
In this section we explore the smoothness,
limiting behaviour and the loss of metric structure under soft
relaxation, before introducing a
divergence-corrected formulation. We also show
the runtime complexity has the same asymptotic order as MSM.

Let $F_\gamma(x, y) = C_{m,m}$ denote the Soft-MSM objective defined by
the forward recursion in Equation~\ref{eq:soft-msm-cm}, with smoothing parameter
$\gamma > 0$ and stabilisation parameter $\epsilon > 0$.

\subsubsection{Smoothness}

\begin{proposition}[Differentiability]
For any $\gamma > 0$ and $\epsilon > 0$, the Soft-MSM objective
$F_\gamma(x, y)$ is continuously differentiable with respect to all
elements of $x$ and $y$.
\end{proposition}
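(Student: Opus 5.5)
We prove the claim by showing that $F_\gamma$ is in fact $C^\infty$ (hence $C^1$) as a function of $(x,y)\in\mathbb{R}^m\times\mathbb{R}^m$. The argument writes $C_{m,m}$ as a finite composition of smooth maps and uses the fact that compositions, sums and products of $C^\infty$ functions are $C^\infty$. The plan is an induction over the dynamic-programming order of the forward recursion in Equation~\ref{eq:soft-msm-cm}.

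\textbf{Building blocks.} We check three facts.

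First, $\operatorname{softmin}_\gamma$ in Equation~\ref{eq:softmin} is $C^\infty$ on $\mathbb{R}^k$. The stabilising shift $s=\min_r a_r$ is itself non-smooth, but it cancels exactly: adding and subtracting $s$ shows $\operatorname{softmin}_\gamma(a)=-\gamma\log\sum_r e^{-a_r/\gamma}$. This is a log-sum-exp composed with linear maps, and its argument is strictly positive, so it is smooth for every $\gamma>0$.

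Second, the gate $g$ in Equation~\ref{eq:between-gate} is smooth. The map $u\mapsto u/\sqrt{u^2+\epsilon}$ is $C^\infty$ on $\mathbb{R}$ because $u^2+\epsilon\ge\epsilon>0$, so the square root is never evaluated at $0$. Also $u=(x-y)(x-z)$ is a polynomial, so $g$ is $C^\infty$ in $(x,y,z)$.

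Third, $\operatorname{trans}_\gamma(x,y,z;c)$ in Equation~\ref{eq:soft-msm-trans} is a constant plus the product of $1-g$ with the softmin of the two polynomials $(x-y)^2$ and $(x-z)^2$. It is therefore $C^\infty$ in $(x,y,z)$. The squared match cost $(x_i-y_j)^2$ is also polynomial.

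\textbf{Induction over the grid.} Order the cells $(i,j)$ lexicographically, so that every predecessor of a cell precedes it. We claim each $C_{i,j}$ is a $C^\infty$ function of $(x,y)$.

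The base case $C_{1,1}=(x_1-y_1)^2$ holds.

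For the boundary cells, $C_{i,1}=\operatorname{trans}_\gamma(x_i,x_{i-1},y_1;c)+C_{i-1,1}$. This is the sum of a smooth function of coordinates and a smooth function by hypothesis, and $C_{1,j}$ is handled symmetrically.

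For interior cells, each of the three arguments of the outer softmin is a smooth local cost plus a predecessor entry, and the predecessor entry is smooth by hypothesis. Composing with the smooth softmin keeps $C_{i,j}$ smooth.

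The grid is finite, so the induction terminates at $F_\gamma=C_{m,m}\in C^\infty$. In particular all partial derivatives with respect to each $x_i$ and $y_j$ exist and are continuous.

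\textbf{Main obstacle.} The only delicate point is the apparent non-smoothness introduced by the implementation details. There are two such places.

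The first is the $\min$ used for stabilisation, both in Equation~\ref{eq:softmin} and in the two-argument softmin inside $\operatorname{trans}_\gamma$. This is dealt with by the cancellation identity above: the operator does not depend on the chosen shift.

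The second is the boundary $u=0$ of the gate, which is exactly where the hard MSM cost switches branch. It is handled by $\epsilon>0$. We note explicitly that the statement fails if $\epsilon=0$, since $g$ then jumps at $u=0$, and also fails if $\gamma=0$, where the recursion uses a hard minimum. This explains why both hypotheses are required.

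As a consistency check, the chain rule applied to the same composition reproduces the backward recursion (Equation~\ref{eq:softmsm-backward}) and the gradient formula (Equation~\ref{eq:softmsm-grad}). This confirms that those expressions are the continuous derivatives whose existence the proposition asserts.
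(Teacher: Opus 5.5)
Your proof is correct and follows the same route as the paper. Both arguments show that the squared match cost, $\operatorname{softmin}_\gamma$, and $\mathrm{trans}_\gamma$ (through the $\epsilon$-regularised gate) are smooth, then induct over the dynamic-programming grid to conclude that $C_{m,m}$ is continuously differentiable. Yours is more careful than the paper's: you make explicit that the stabilising shift in $\operatorname{softmin}_\gamma$ cancels, that $u^2+\epsilon>0$ keeps the gate smooth, and you obtain the stronger $C^\infty$ conclusion.
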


\begin{proof}
The Soft-MSM recursion (Equation~\ref{eq:soft-msm-cm}) is constructed from compositions
of the following functions: squared differences $(x_i - y_j)^2$,
which are smooth; the soft minimum operator (Equation~\ref{eq:softmin}), which
is smooth for $\gamma > 0$; and the transition function
$\mathrm{trans}_\gamma(x, y, z; c)$ (Equation~\ref{eq:soft-msm-trans}),
which is smooth for $\epsilon > 0$ due to the smooth gate and soft minimum.

The boundary conditions are smooth functions of the inputs. Each entry
$C_{i,j}$ is defined recursively as a composition of smooth functions of
previous entries. By induction over $(i,j)$, all entries of $C$ are
continuously differentiable. Since $F_\gamma(x,y) = C_{m,m}$, the result
follows.
\end{proof}

\subsubsection{Limiting Behaviour}

\begin{proposition}[Hard-alignment limit]
As $\gamma \to 0^+$ and $\epsilon \to 0^+$, the Soft-MSM recursion
converges pointwise to a hard-min dynamic programming recursion with
MSM-style transitions and squared pointwise deviations. The limiting
objective differs from standard MSM in its use of squared rather than
absolute deviations, a modification required to ensure differentiability.
\end{proposition}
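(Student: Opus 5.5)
We argue entrywise by induction over the cost matrix $C$. The target recursion is the hard-min one with squared match costs $(x_i-y_j)^2$ and the hard transition cost
\[
\mathrm{trans}_0(x,y,z;c)=\begin{cases} c, & (x-y)(x-z)<0,\\ c+\min\{(x-y)^2,(x-z)^2\}, & (x-y)(x-z)>0.\end{cases}
\]
At $u=(x-y)(x-z)=0$ the gate stays at $g=1/2$ for every $\epsilon$. There we either record the limiting value $c+\tfrac12\min\{(x-y)^2,(x-z)^2\}$, or note that $\min\{(x-y)^2,(x-z)^2\}=0$ when $u=0$. In fact the latter always holds: $u=0$ means $x=y$ or $x=z$, so the min is zero. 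Hence $\mathrm{trans}_0=c$ on the boundary, matching MSM's convention there, and the limit is unambiguous. We fix the inputs $x,y$ and the penalty $c$, and take the joint limit $(\gamma,\epsilon)\to(0^+,0^+)$ in any manner.

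\textbf{Step 1: the two elementary operators.} First we show $\operatorname{softmin}_\gamma(a_1,\dots,a_k)\to\min_r a_r$. This follows from the sandwich
\[
\min_r a_r-\gamma\log k\le \operatorname{softmin}_\gamma(a)\le \min_r a_r,
\]
read off Equation~\ref{eq:softmin}. The sandwich is uniform in $a$, and softmin is $1$-Lipschitz in the sup-norm.

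Second, the gate. For fixed $u\neq 0$ we have $u/\sqrt{u^2+\epsilon}\to\operatorname{sign}(u)$, so $g\to \mathbf{1}[u<0]$; for $u=0$, $g=1/2$.

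Third, we combine these in Equation~\ref{eq:soft-msm-trans}, using that the factor $1-g$ stays in $[0,1]$ and that $\operatorname{softmin}_\gamma\bigl((x-y)^2,(x-z)^2\bigr)$ converges. This gives $\mathrm{trans}_{\gamma}\to\mathrm{trans}_0$ pointwise, including the boundary case where the softmin limit is $0$.

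\textbf{Step 2: induction on $C$.} We order cells lexicographically. The base cell $C_{1,1}=(x_1-y_1)^2$ does not depend on $(\gamma,\epsilon)$. The boundary cells are sums of a previous entry and a transition cost, so they converge by Step 1. For interior cells, each argument of the outer softmin in Equation~\ref{eq:soft-msm-cm} converges by the induction hypothesis and Step 1. Then
\[
|\operatorname{softmin}_\gamma(a^{\gamma})-\min(a^0)|\le |\operatorname{softmin}_\gamma(a^\gamma)-\min(a^\gamma)|+|\min(a^\gamma)-\min(a^0)|\le \gamma\log 3+\|a^\gamma-a^0\|_\infty\to 0.
\]
Therefore $C_{i,j}\to C^0_{i,j}$, and in particular $F_\gamma\to C^0_{m,m}$.

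\textbf{Step 3: identify the limit.} By structure, $C^0$ is exactly Equation~\ref{eq:msm-cm} and its boundary rules, with $|\cdot|$ replaced by $(\cdot)^2$ both in the match term and inside the min of $\mathrm{msm\_cost}$. This is the claimed MSM-style recursion with squared deviations. It generally differs from MSM; a single example with $C_{1,1}=(x_1-y_1)^2\neq|x_1-y_1|$ suffices to show this.

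\textbf{Main obstacle.} The delicate point is the joint limit in the gate. Convergence of $g$ is not uniform near $u=0$, so we must keep the inputs fixed (pointwise convergence) rather than claim uniform convergence. The observation that the softmin term vanishes exactly where the gate is ambiguous is what removes any dependence on the order or path of $(\gamma,\epsilon)\to 0$. If the term did not vanish there, we would instead state the limit at $u=0$ with the factor $1/2$.
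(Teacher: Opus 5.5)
Your proof is correct and follows the same route as the paper's: softmin tends to min, the gate tends to the between-indicator, and the limits are substituted into the recursion. The difference is that you supply the details the paper omits, since your cell-by-cell induction with the bound $\gamma\log 3+\|a^\gamma-a^0\|_\infty$ is what actually justifies the paper's one-line ``substituting these limits''.

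Your handling of $u=0$ also genuinely improves on the paper's proof, which asserts that $g$ tends to the indicator even though $g\equiv 1/2$ there; your observation that $\min\{(x-y)^2,(x-z)^2\}=0$ at $u=0$ is what makes the limiting cost equal $c$, as in MSM. One small correction to your ``main obstacle'' remark: that observation secures agreement with MSM's convention, not path-independence. For fixed inputs, $g$ equals $1/2$ for every $\epsilon$, so there is no path dependence to remove.
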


\begin{proof}
The soft minimum operator satisfies
\[
\lim_{\gamma \to 0^+} \text{softmin}_\gamma(a_1, \ldots, a_k) = \min(a_1, \ldots, a_k).
\]
Similarly, the smooth gate $g(x,y,z)$ converges pointwise to the
indicator function of $x$ lying between $y$ and $z$ as
$\epsilon \to 0^+$. Therefore, the transition function
$\mathrm{trans}_\gamma$ converges to a piecewise-defined transition cost
that mirrors MSM but with squared deviations in place of absolute
differences. Substituting these limits into Equation~\ref{eq:soft-msm-cm} yields a
hard-min recursion with the same move/split/merge structure as MSM, but
with squared local costs.
\end{proof}

\subsubsection{Non-metricity of soft relaxations}
\label{sec:non-metric}

\begin{proposition}[Soft-MSM is not a metric]
\label{prop:non-metric}
For any $\gamma > 0$, the Soft-MSM objective $F_\gamma$ does not satisfy
the identity of indiscernibles and therefore is not a metric.
\end{proposition}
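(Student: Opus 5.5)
The plan is to show that $F_\gamma(x,x)\neq 0$ for some (indeed essentially every) series $x$, so the identity of indiscernibles fails. The simplest route is to take a single explicit example and argue that $C_{m,m}$ is strictly negative there. Since the statement is for every $\gamma>0$, the example should work uniformly in $\gamma$. The key tool is the elementary bound
\[
\operatorname{softmin}_\gamma(a_1,\dots,a_k)\le \min_r a_r - \gamma\log\#\{r: a_r=\min_r a_r\} \le \min_r a_r,
\]
with strict inequality $\operatorname{softmin}_\gamma(a_1,\dots,a_k)<\min_r a_r$ whenever $k\ge 2$. This holds because the sum inside the logarithm in Equation~\ref{eq:softmin} is at least $1$, with one term equal to $1$ and every other term strictly positive.

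First I treat the smallest nontrivial case $m=2$ with $x=y=(x_1,x_2)$, taking $x_1\neq x_2$ for definiteness, although any values will do.
\begin{enumerate}
\item The initialisation gives $C_{1,1}=0$.
\item The boundary entries satisfy $C_{2,1},C_{1,2}\ge c>0$. Indeed $\mathrm{trans}_\gamma\ge c+(1-g)s$, where $0\le g\le 1$, and $s=\operatorname{softmin}_\gamma$ of two nonnegative numbers may be negative, but only by at most $\gamma\log 2$. So I will bound $\mathrm{trans}_\gamma\ge c-\gamma\log 2$ rather than assume positivity.
\item For the final entry, Equation~\ref{eq:soft-msm-cm} gives $C_{2,2}=\operatorname{softmin}_\gamma(0+0,\;\cdot,\;\cdot)$, whose first argument is the diagonal term $(x_2-x_2)^2+C_{1,1}=0$. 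By the strict inequality above, $C_{2,2}<0$.
\end{enumerate}
Hence $F_\gamma(x,x)<0\neq 0$.

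One step needs care: the strict inequality $\operatorname{softmin}_\gamma(a_1,a_2,a_3)<\min(a_1,a_2,a_3)$ requires only that the other arguments are finite, which they are. So the conclusion $C_{2,2}<0\le\min$ does not require controlling their size, and the sign issue in step 2 is irrelevant. The argument therefore holds for every $\gamma>0$, $\epsilon>0$ and $c>0$.

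To make the statement robust for general $m$, I would also note the inductive version. Along the diagonal of $C$ for $x=y$, we have $C_{i,i}\le \operatorname{softmin}_\gamma(C_{i-1,i-1},\dots)<C_{i-1,i-1}$, so $C_{m,m}<C_{1,1}=0$ for every $m\ge 2$. This gives $F_\gamma(x,x)<0$ for all $x$ and every length $m\ge2$.

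Finally, since a metric requires $d(x,x)=0$, and non-negativity also fails, $F_\gamma$ is not a metric. This motivates the divergence correction that follows.
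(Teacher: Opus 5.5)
Your proof is correct, but it takes a different route from the paper's. The paper fixes the single example $x=y=(0,0)$ and computes everything explicitly. There $u=0$, so the gate is $g=1/2$, and $\mathrm{softmin}_\gamma(0,0)=-\gamma\log 2$. Hence every transition costs $t=c-\tfrac{\gamma}{2}\log 2$, and $C_{2,2}=-\gamma\log\bigl(1+2\exp(-2t/\gamma)\bigr)<0$. You never evaluate the transition costs at all. For $x=y$ the diagonal argument of $C_{i,i}$ is $C_{i-1,i-1}+0$, and a softmin of at least two finite arguments lies strictly below their minimum. So $C_{i,i}<C_{i-1,i-1}$, and by induction $F_\gamma(x,x)<0$ for \emph{every} $x$ and every $m\ge 2$, independently of $c$ and $\varepsilon$. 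The paper's computation buys an explicit value for the self-similarity bias, showing how it depends on $c$ and $\gamma$. Your argument buys universality: it makes rigorous the paper's informal remark after the proof that such relaxations fail whenever non-diagonal paths receive positive weight. One small blemish: the claim in your step 2 that $C_{2,1},C_{1,2}\ge c$ is false in general; for instance $\mathrm{trans}_\gamma(0,0,0;c)=c-\tfrac{\gamma}{2}\log 2<c$. You immediately weaken it and correctly note it is not needed, so it would be cleaner to delete it.
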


\begin{proof}
It is sufficient to give a counter example. Consider two identical length-two
series
\[
x=y=(0,0).
\]
Then \(C_{1,1}=0\). For the boundary transitions, we have
\(u=(0-0)(0-0)=0\), so the smooth gate has value \(g=1/2\), and
\[
\mathrm{softmin}_\gamma(0,0)=-\gamma\log 2.
\]
Hence each boundary transition has cost
\[
t
=
\mathrm{trans}_\gamma(0,0,0;c)
=
c-\frac{\gamma}{2}\log 2.
\]
The final cell is therefore
\[
F_\gamma(x,x)
=
C_{2,2}
=
\mathrm{softmin}_\gamma(0,2t,2t)
=
-\gamma \log\!\left(1+2\exp(-2t/\gamma)\right).
\]
Since \(\exp(-2t/\gamma)>0\), the quantity inside the logarithm is strictly
larger than \(1\). Therefore
\[
F_\gamma(x,x)<0.
\]
Thus \(F_\gamma(x,x)\neq 0\) even though \(x=y\), so the identity of
indiscernibles and non-negativity fail. Hence Soft-MSM is not a metric.
\end{proof}

More generally, any log-sum-exp relaxation over alignment paths will fail
to satisfy the identity of indiscernibles whenever the self-alignment
partition function assigns non-zero weight to at least one path other than
the zero-cost diagonal path. The counterexample above shows that this
condition holds for Soft-MSM.

\subsubsection{Divergence Formulation}

The fact that $F_\gamma(x,x)$ can differ from zero, and can even take negative values (Proposition~\ref{prop:non-metric}),
introduces a bias that can prevent $F_\gamma$
from behaving as a meaningful dissimilarity, since pairwise comparisons are offset by input-dependent baselines.
This motivates a debiased analogue,
constructed in the same way as the Soft-DTW divergence of
\cite{blondel21softdtwdivergence}.

\begin{definition}[Soft-MSM divergence]
\label{def:soft-msm-divergence}
The Soft-MSM divergence is defined as
\[
D_\gamma(x, y) = F_\gamma(x, y) - \tfrac{1}{2} F_\gamma(x, x) - \tfrac{1}{2} F_\gamma(y, y).
\]
\end{definition}
By construction, $D_\gamma(x, x) = 0$ for all $x$. A natural further question is whether $D_\gamma(x,y) \ge 0$ for all
$x,y$, mirroring the corresponding property of the Soft-DTW divergence. Non-negativity is desirable because it allows the divergence to be interpreted
as a dissimilarity, with zero corresponding to identical inputs and larger
values indicating increasing mismatch.

For Soft-DTW, non-negativity is established by considering the partition
function, a weighted sum over all alignment paths, in which each path
$P \in \mathcal{P}$ is weighted by the exponential of its negative path
cost from Equation~\ref{eq:path-cost}:

\[
K_\gamma(x,y) \;=\; \sum_{P\in\mathcal{P}} \exp\!\big(-\mathcal{C}(P;x,y)/\gamma\big).
\]
The soft objective satisfies
\[
F_\gamma(x,y) = -\gamma \log K_\gamma(x,y).
\]
The proof for Soft-DTW shows that $K_\gamma$ is a positive semidefinite (PSD)
kernel on $\mathbb{R}^m \times \mathbb{R}^m$.
Hence the
Cauchy--Schwarz inequality gives
\[
K_\gamma(x,y)^2 \;\le\; K_\gamma(x,x)\,K_\gamma(y,y),
\]
which, after applying the logarithmic transformation above, implies
$D_\gamma(x,y) \ge 0$.

The PSD property for Soft-DTW relies on two structural facts. First, each
local cost is a pairwise function $\delta(x_i,y_j)$ of one point from each
series. Second, for squared Euclidean $\delta$,
$\exp(-\delta(x_i,y_j)/\gamma)$ is itself a PSD kernel~\cite{blondel21softdtwdivergence}.
Consequently, each path weight factorises as a product of pairwise PSD
kernel evaluations, and $K_\gamma$ can be expressed as a sum over such
path weights. Since products and sums of PSD kernels remain PSD,
$K_\gamma$ inherits positive semidefiniteness; equivalently, this is an R-convolution kernel construction~\cite{haussler99convolution}.

Soft-MSM does not satisfy the first condition. The off-diagonal transition
costs $\mathrm{trans}_\gamma(x_i,x_{i-1},y_j;c)$ and
$\mathrm{trans}_\gamma(y_j,y_{j-1},x_i;c)$ each depend on two points from
one series and one from the other. The corresponding local factors
$\exp(-\mathrm{trans}_\gamma/\gamma)$ are therefore not pairwise kernels
of the form $k(x_i,y_j)$, so the Soft-DTW factorisation argument does not
extend directly to Soft-MSM.

Thus, although the debiased form removes the self-similarity bias by
construction, non-negativity would require an additional argument beyond
the convexity of the soft-minimum operator. Establishing such a guarantee for Soft-MSM is left to future work.



\subsubsection{Computational Complexity}

\begin{proposition}
For two time series of length $m$, Soft-MSM has time complexity
$\mathcal{O}(m^2)$ and space complexity $\mathcal{O}(m^2)$, matching the
asymptotic complexity of standard MSM.
\end{proposition}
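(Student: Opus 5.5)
The plan is to count operations in the three phases of the computation: the forward recursion (Equation~\ref{eq:soft-msm-cm}), the backward recursion (Algorithm~\ref{alg:soft-msm-backward}), and the gradient assembly (Equation~\ref{eq:softmsm-grad}). In each phase I would show that every cell of an $m\times m$ grid is visited a constant number of times with $\mathcal{O}(1)$ work per visit.

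\textbf{Primitive operations.} I first check that each primitive evaluated per cell costs $\mathcal{O}(1)$ time and memory, independent of $m$.
\begin{itemize}
\item $\mathrm{trans}_\gamma$ (Algorithm~\ref{alg:soft-msm-trans}) uses a fixed number of arithmetic operations, one square root and a two-argument $\mathrm{softmin}_\gamma$.
\item $\mathrm{softmin}_\gamma$ of three arguments uses a minimum, three exponentials and one logarithm.
\item $\mathrm{trans\_grads}_\gamma$ (Algorithm~\ref{alg:soft-msm-trans-grads}) also uses a fixed number of scalar operations.
\end{itemize}
I treat $\exp$, $\log$ and $\sqrt{\cdot}$ as unit-cost, which is the same convention used for MSM's absolute value and minimum.

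\textbf{Forward pass.} This step has three parts.
\begin{itemize}
\item $C_{1,1}$ costs $\mathcal{O}(1)$.
\item The $2(m-1)$ boundary entries each cost $\mathcal{O}(1)$.
\item Each of the $(m-1)^2$ interior entries needs one squared difference, two $\mathrm{trans}_\gamma$ calls and one three-argument softmin, all reading already-computed neighbours.
\end{itemize}
This gives $\mathcal{O}(m^2)$ time. Storing $C$ takes $\mathcal{O}(m^2)$ space, and the backward pass needs that storage.

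\textbf{Backward pass.} Algorithm~\ref{alg:soft-msm-backward} runs a double loop over the $m^2$ cells. Each iteration recomputes at most two $\mathrm{trans}_\gamma$ values and one squared difference, evaluates at most three exponentials, and reads stored entries of $C$ and $A$. This is $\mathcal{O}(m^2)$ time and $\mathcal{O}(m^2)$ space for $A$.

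\textbf{Gradient assembly.} The edge weights $G_{i,j}, V_{i,j}, H_{i,j}$ in Equations~\ref{eq:softmsm-g}--\ref{eq:softmsm-h} each cost $\mathcal{O}(1)$ per cell, so computing all of them is $\mathcal{O}(m^2)$. Naively, each $\partial F_\gamma/\partial x_i$ in Equation~\ref{eq:softmsm-grad} is a sum over $j$ of $\mathcal{O}(1)$ terms. Over all $i$ this is $\mathcal{O}(m^2)$, since each edge $(i,j)$ contributes to at most two gradient coordinates: the vertical edge enters both $x_i$ and $x_{i-1}$. The same argument gives the gradient in $y$.

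\textbf{Comparison with MSM.} Standard MSM fills the same $m\times m$ grid with $\mathcal{O}(1)$ work per cell, so it is also $\mathcal{O}(m^2)$ in time. In the usual full-matrix implementation it is also $\mathcal{O}(m^2)$ in space. Hence the asymptotic orders match, and only the constants differ.

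\textbf{Main obstacle.} The one subtle point is space. MSM's distance alone can be computed in $\mathcal{O}(m)$ memory with two rolling rows, while Soft-MSM's gradient requires the full $C$. I would state the comparison against the standard full-cost-matrix MSM, which is also needed to recover the alignment path, and note that the forward value of Soft-MSM alone admits the same $\mathcal{O}(m)$ rolling-row reduction. This keeps the claim accurate under either reading.
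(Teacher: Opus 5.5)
Your proposal is correct and follows the same argument as the paper: the forward and backward recursions both run over the same $m\times m$ dynamic-programming lattice, with a constant number of $\mathrm{trans}_\gamma$ and $\mathrm{softmin}_\gamma$ evaluations per cell. You go slightly further than the paper's proof in two accurate ways: you also bound the gradient assembly in Equation~\ref{eq:softmsm-grad}, and you point out that the $\mathcal{O}(m^2)$ space comparison is against full-matrix MSM, since a rolling-row MSM distance needs only $\mathcal{O}(m)$ memory.
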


\begin{proof}
Standard MSM computes an $m \times m$ dynamic-programming matrix,
requiring constant work per cell and therefore $\mathcal{O}(m^2)$ time
and $\mathcal{O}(m^2)$ space.

Soft-MSM uses the same dynamic-programming lattice. The forward
recursion computes one $m \times m$ cost matrix, and each cell requires
a constant number of evaluations of the smooth transition function and
soft minimum. The backward recursion similarly processes each cell
once. Therefore, Soft-MSM has the same asymptotic time and space
complexity as MSM, namely $\mathcal{O}(m^2)$.
\end{proof}

\section{Experimental Evaluation}
\label{sec:results}

To evaluate Soft-MSM, we follow the experimental methodology introduced by~\cite{cuturi2017softdtw} for Soft-DTW.
We also extend their design in two ways: (i) by running each of the experiments on a larger set of datasets, and (ii) by including additional downstream evaluations for clustering and classification.
Unless stated otherwise, all experiments are configured as in~\cite{cuturi2017softdtw}.
Performance is measured on the 112 UCR datasets~\cite{dau19ucr}.

For experiments involving many estimators across many datasets, we report results using complementary summary and pairwise comparison tools.
We compare average ranks using a critical difference (CD) diagram~\citep{demsar06comparisons} with a Wilcoxon signed-rank test for pairwise comparisons, and form cliques using Holm correction as recommended by~\citep{garcia08pairwise} and~\citep{benavoli16pairwise}.
We use $\alpha = 0.1$ for all hypothesis tests.
CD diagrams (e.g., Figure~\ref{fig:centroids}) show average estimator ranks, with cliques indicated by solid bars; estimators within a clique are not significantly different.
However, CD diagrams alone can obscure effect sizes and dataset-level behaviour, so we also report summary heat maps following~\citep{ismail2023approach} and include scatter plots for direct pairwise comparisons.

All experiments were conducted using the \texttt{aeon} open-source time series machine learning toolkit\footnote{\url{https://github.com/aeon-toolkit/aeon}} and evaluated using the \texttt{tsml-eval} package.
We additionally provide Numba, PyTorch, and TensorFlow implementations of Soft-MSM, together with a reproducibility notebook that documents how to run all experiments\footnote{\url{https://github.com/time-series-machine-learning/soft-msm}}.
All results reported in this paper are also included as CSV files (one per evaluation metric) alongside the notebook.

\subsection{Averaging}
In order to assess Soft-MSM, we focus on the problem of time series averaging and its application in clustering and classification. To assess averaging in isolation, we follow the experimental design used to assess Soft-DTW~\cite{cuturi2017softdtw}:
for each dataset, we select a class label at random and then sample $10$ time series from that class. We then compute a barycentre using three averaging procedures for both DTW and MSM:
\begin{itemize}
    \item DTW-based methods: DBA~\cite{petitjean11dba}, SSG-DBA~\cite{schultz18ssgba}, and Soft-DTW barycentres (Soft-DBA)~\cite{cuturi2017softdtw};
    \item MSM-based methods: MBA~\cite{holder23mba}, a stochastic subgradient MSM barycentre (SSG-MBA), and our proposed Soft-MSM barycentre (Soft-MBA).
\end{itemize}
For each dataset and procedure we repeat the experiment $10$ times with different random seeds. To assess the quality of the barycentres we always evaluate under the hard elastic distance that defines the geometry of interest. For DTW-based methods we report the DTW Fr\'echet loss (Equation~\ref{eq:frechet} with $d=\text{DTW}$), while for MSM-based methods we report the MSM Fréchet loss (Equation~\ref{eq:frechet} with $d=\text{MSM}$).
This ensures that Soft-MBA is compared fairly against MBA and SSG-MBA in the MSM geometry, and Soft-DBA against DBA and SSG-DBA in the DTW geometry. For soft methods we consider $\gamma \in \{1, 0.1, 0.01, 0.001\}$.

\begin{table}[t]
\caption{Percentage of datasets on which the Soft-DTW Barycentre Average achieves a lower DTW loss compared to Dynamic Barycentre Averaging (DBA) and Stochastic Subgradient Dynamic Barycentre Averaging (SSG-DBA).}
\label{tab:soft_dtw_vs_baselines_by_gamma}
\centering
\makebox[\textwidth][c]{%
\begin{tabular}{l|c|c}
\toprule
\multirow{2}{*}{$\gamma$} \\
& \textbf{DBA} & \textbf{SSG-DBA} \\
& Better (\%) & Better (\%) \\
\midrule
1   & 2.8\,\% & 14.7\,\% \\
0.1   & 48.6\,\% & 49.5\,\% \\
0.01   & 86.2\,\% & 78.9\,\% \\
0.001   & 93.6\,\% & 83.5\,\% \\
\bottomrule
\end{tabular}
}
\end{table}

\begin{table}[t]
\caption{Percentage of datasets on which the Soft-MSM Barycentre Average achieves a lower MSM loss compared to MSM Barycentre Averaging (MBA) and Stochastic Subgradient MSM Barycentre Averaging (SSG-MBA).}
\label{tab:soft_msm_vs_baselines_by_gamma}
\centering
\makebox[\textwidth][c]{%
\begin{tabular}{l|c|c}
\toprule
\multirow{2}{*}{$\gamma$} \\
& \textbf{MBA} & \textbf{SSG-MBA} \\
& Better (\%) & Better (\%) \\
\midrule
1   & 7.3\,\% & 26.6\,\% \\
0.1   & 94.5\,\% & 96.3\,\% \\
0.01   & 97.2\,\% & 100.0\,\% \\
0.001   & 97.2\,\% & 100.0\,\% \\
\bottomrule
\end{tabular}
}
\end{table}

Table~\ref{tab:soft_dtw_vs_baselines_by_gamma} summarises the percentage of datasets on which the Soft-DTW barycentre achieves a lower DTW loss than DBA and SSG-DBA for different values of~$\gamma$.
As in~\cite{cuturi2017softdtw}, we observe that as $\gamma$ decreases, Soft-DTW increasingly outperforms both DBA and the subgradient method, achieving lower DTW loss on $93.6\%$ and $83.5\%$ of datasets respectively for $\gamma = 0.001$.
These results are in line with the findings in~\cite{cuturi2017softdtw} even with the expanded set of datasets used.

Table~\ref{tab:soft_msm_vs_baselines_by_gamma} reports the analogous comparison for MSM-based averaging, where we evaluate all methods under MSM loss.
Here, Soft-MBA exhibits an even stronger advantage.
For moderate to low values of $\gamma$ (e.g., $\gamma = 0.1$ and $\gamma = 0.01$), Soft-MBA achieves lower MSM loss than MBA and SSG-MBA on almost all datasets, and matches or exceeds SSG-MBA on all datasets for the smallest $\gamma$ considered.
In contrast, for $\gamma = 1$ the soft objective is overly smoothed and the performance gap is smaller, mirroring the behaviour of Soft-DTW.

Overall, the results in Tables~\ref{tab:soft_dtw_vs_baselines_by_gamma} and~\ref{tab:soft_msm_vs_baselines_by_gamma} demonstrate that Soft-MBA consistently improves upon non-differentiable MSM-based averaging schemes under MSM loss. They also show that,  in the MSM geometry, Soft-MBA attains larger gains over its hard counterpart than Soft-DTW does over DBA in the DTW geometry.
These results indicate that making MSM differentiable yields substantial benefits for averaging, particularly when combined with a suitably chosen smoothing parameter $\gamma$.

\subsubsection{Qualitative Analysis: Class Prototypes}

To complement the quantitative evaluation, we consider whether the resulting
barycentres provide interpretable class prototypes. We use the
\textsc{CricketX}\footnote{\url{https://timeseriesclassification.com/description.php?Dataset=CricketX}}
dataset, for which the class structure has a clear physical interpretation.
The data consist of motion recordings of an umpire making cricket hand
signals~\cite{ko02cricket}; \textsc{CricketX} contains the X-axis measurements
only, with the left- and right-hand signals concatenated. In cricket, the
official umpire signal for a six is to raise both arms above the head. This
produces a characteristic movement in both hands, although there is substantial
variation between trials. Figure~\ref{fig:cricket} shows the prototypes
obtained by the three MSM-based averaging methods and Soft-DTW. Soft-MSM
clearly recovers a peak for each hand while maintaining a relatively smooth
trajectory elsewhere. The alternative methods recover this structure less
clearly and produce more variable prototypes.
\begin{figure}[htb]
    \includegraphics[width=1.0\linewidth]{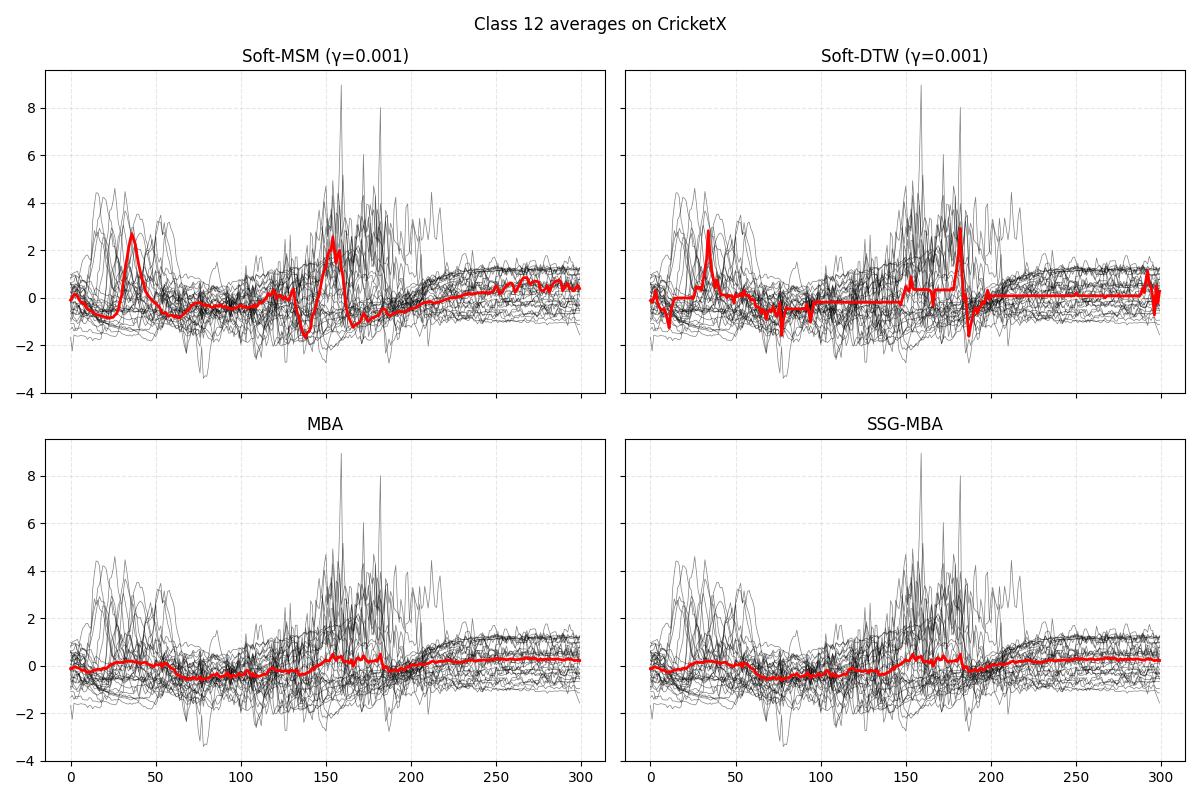}
    \caption{Example prototypes formed for class 12 (six runs) of the \textsc{CricketX} dataset with four alternative averaging algorithms.}
    \label{fig:cricket}
\end{figure}
To directly compare Soft-DTW to Soft-MSM, we assess both distances on two common applications that employ elastic distances: clustering and classification.

\FloatBarrier
\subsection{Time Series Clustering}
\label{sec:results-clust}

The most common application of averaging in time series analysis is within the $k$-means clustering algorithm. For each dataset, the number of clusters $k$ is set equal to the number of classes. We then run $k$-means using both Soft-DTW barycentre averaging (Soft-DBA) and Soft-MSM barycentre averaging (Soft-MBA) to compute the centroids.

Following the experimental design of~\cite{cuturi2017softdtw}, we use the combined train--test set and evaluate four values of the smoothing parameter $\gamma$: $1.0$, $0.1$, $0.01$, and $0.001$. Table~\ref{tab:soft_hard_dist_vs_baselines_by_gamma} reports the percentage of datasets for which $k$-means with Soft-MBA and Soft-DBA achieves a lower MSM and DTW loss, respectively, compared to their hard counterparts (measured by sum of squared errors).

\begin{table}[h!]
\caption{Percentage of datasets in which Soft-MBA and Soft-DBA based $k$-means achieve lower MSM and DTW loss, respectively (sum of squared error).}
\label{tab:soft_hard_dist_vs_baselines_by_gamma}
\centering
\begin{tabular}{c|c|c}
\toprule
$\gamma$ & \textbf{MBA} & \textbf{DBA} \\
         & Better (\%)  & Better (\%)  \\
\midrule
1.0   & 8.3\,\%  & 1.9\,\%  \\
0.1   & 87.2\,\% & 10.6\,\% \\
0.01  & 84.8\,\% & 55.4\,\% \\
0.001 & 83.5\,\% & 84.8\,\% \\
\bottomrule
\end{tabular}
\end{table}

The results in Table~\ref{tab:soft_hard_dist_vs_baselines_by_gamma} highlight a clear and consistent advantage of Soft-MSM for barycentre estimation. Across all values of $\gamma$, Soft-MBA improves upon its hard counterpart in the large majority of datasets, with performance remaining stable as the smoothing parameter varies. In contrast, the behaviour of Soft-DBA is considerably more sensitive to $\gamma$: while it performs poorly for large smoothing values, its advantage over hard DBA only becomes consistent for very small $\gamma$. This suggests that Soft-MSM provides a more robust relaxation of MSM than Soft-DTW does for DTW in the context of barycentre averaging.

For context, we extend our experimental evaluation to include comparisons with commonly used time series clustering approaches. In particular, we compare both methods against standard $k$-means with Euclidean averaging (denoted $k$-AVG) and $k$-Shape~\cite{paparrizos16kshapes}, which has recently been shown to be a strong benchmark for time series clustering ~\cite{paparrizos25tsclustcomp}.

\begin{figure}[htb]
    \centering
    \begin{tabular}{c c}
        \includegraphics[width=0.5\linewidth]{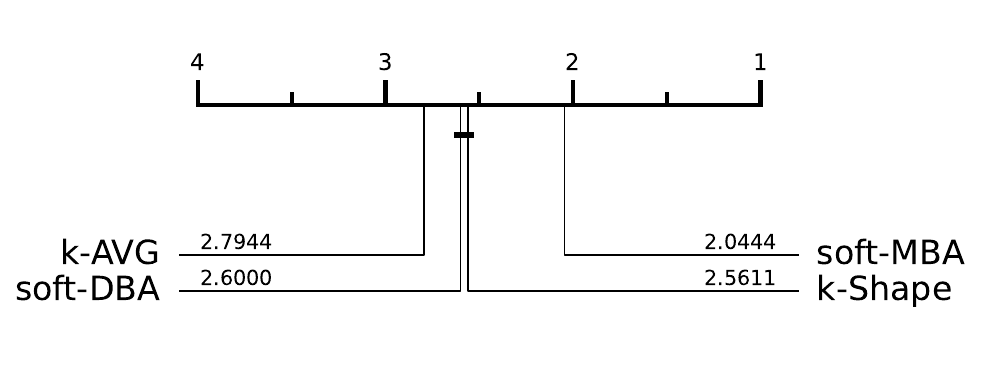} &
        \includegraphics[width=0.5\linewidth]{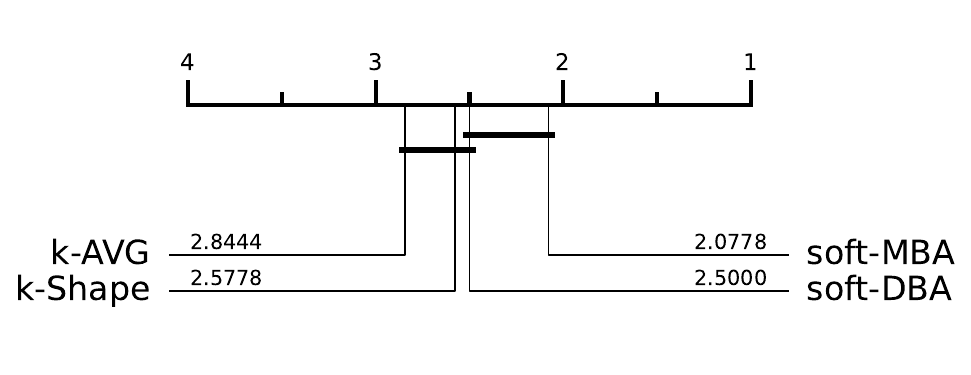} \\
        Clustering Accuracy & Adjusted Rand Index \\
    \end{tabular}
    \caption{Average ranks of four clustering algorithms.}
    \label{fig:cd}
\end{figure}

\begin{figure}[htb]
    \centering
    \includegraphics[width=1.0\linewidth]{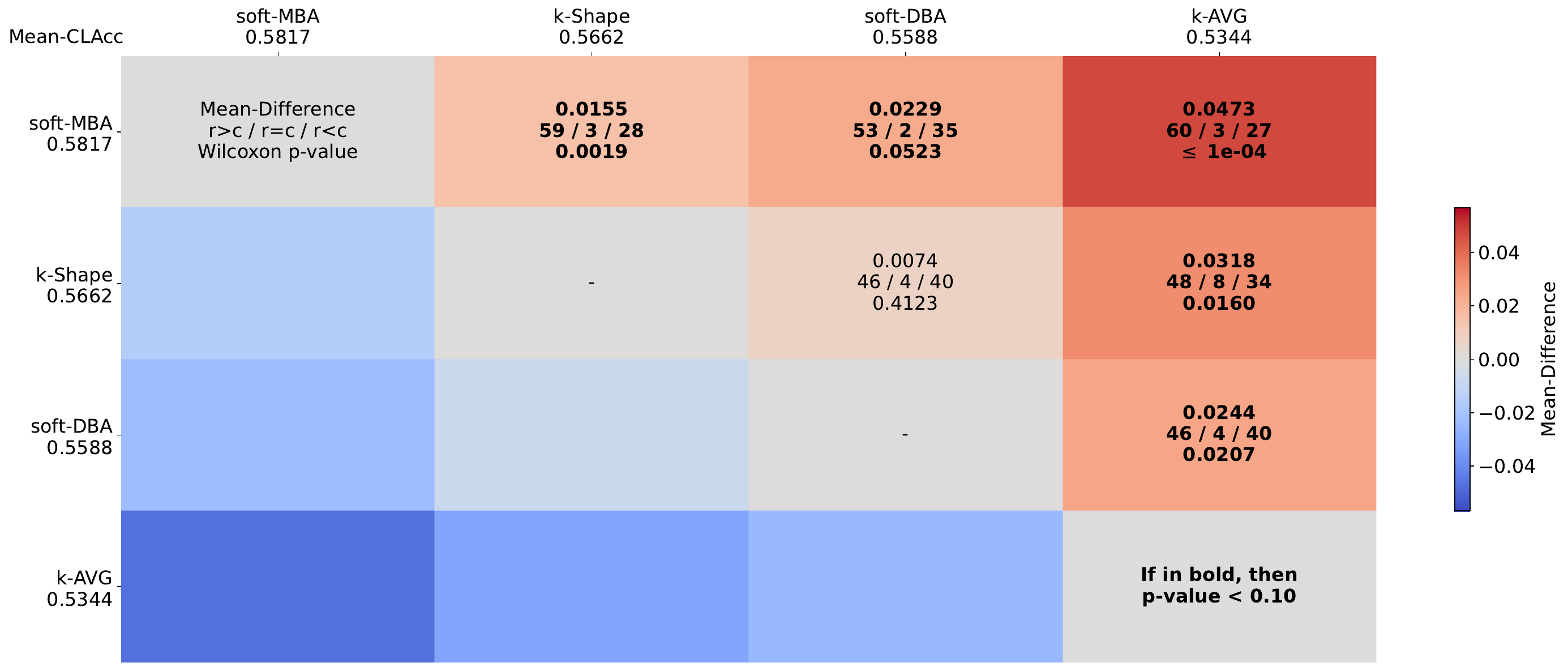}
    \caption{Summary performance of four clustering algorithms on the UCR datasets.}
    \label{fig:heatmap}
\end{figure}

Figure~\ref{fig:cd} shows the average ranks for clustering accuracy (CLAcc) and Adjusted Rand Index (ARI) across the four methods. Figure~\ref{fig:heatmap} provides further detail on  pairwise relative performance. Soft-MBA is significantly better than $k$-Shape in both CLAcc and ARI and significantly better than Soft-DBA in terms of CLAcc.

These results show that Soft-MBA is an effective differentiable MSM-based clustering objective. Soft-MBA consistently outperforms Soft-DBA, with the difference being particularly pronounced for clustering accuracy, where it is over 2\% more accurate on average. Its performance is also competitive with $k$-Shape, despite the two methods relying on fundamentally different notions of similarity and optimisation.

\subsection{Time Series Classification}

The nearest-centroid classifier~\cite{hastie2009elements} provides a simple yet effective alternative to the traditional $k$-nearest neighbour ($k$-NN) approach for time series classification~\cite{veenman11sparse}. It addresses two main limitations of $k$-NN. First, the $k$-NN classifier must store all training instances for use at inference time. Second, to predict a class label, it requires computing the distance between the test series and every series in the training set. In contrast, the nearest-centroid method represents each class by a single prototype. As a result, only one representative time series per class needs to be stored, and at prediction time, only as many distance computations as there are class labels are required to determine the predicted class.

Following the experiments in~\cite{cuturi2017softdtw}, we construct one prototype per class by computing a barycentre of the training instances in that class using both Soft-DTW and Soft-MSM. We are not advocating either approach as a recommended algorithm for time series classification (TSC): the accuracy using a nearest centroid with a single prototype per class is not competitive with modern TSC algorithms. Rather, we use the nearest centroid classifiers to compare the relative effectiveness of averaging. We use two variants: the first uses the soft function for forming centroids and also for finding the neighbour. We call these Soft-DBA and Soft-MBA. The second version follows the approach in~\cite{cuturi2017softdtw} by averaging with the soft function but using the standard distance for classification. These are designated Soft-DBA2 and Soft-MBA2. Figure~\ref{fig:centroids} shows the average ranks of these four algorithms for accuracy and balanced accuracy over the 112 UCR datasets. A solid bar indicates there is no significant difference between the algorithms covered (using Wilcoxon signed rank test with Holm correction for multiple testing). The figure shows that both MBA variants are significantly better than the DBA versions.

\begin{figure}[htb]
    \centering
            \begin{tabular}{c c}
    \includegraphics[width=0.5\linewidth]{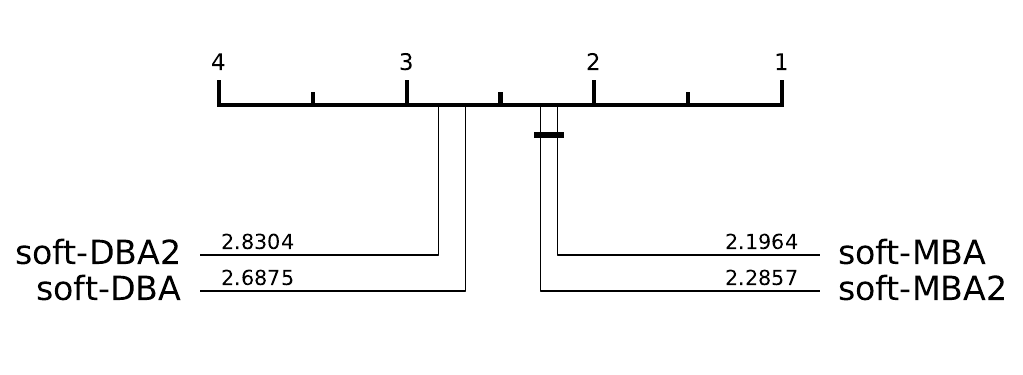} &
    \includegraphics[width=0.5\linewidth]{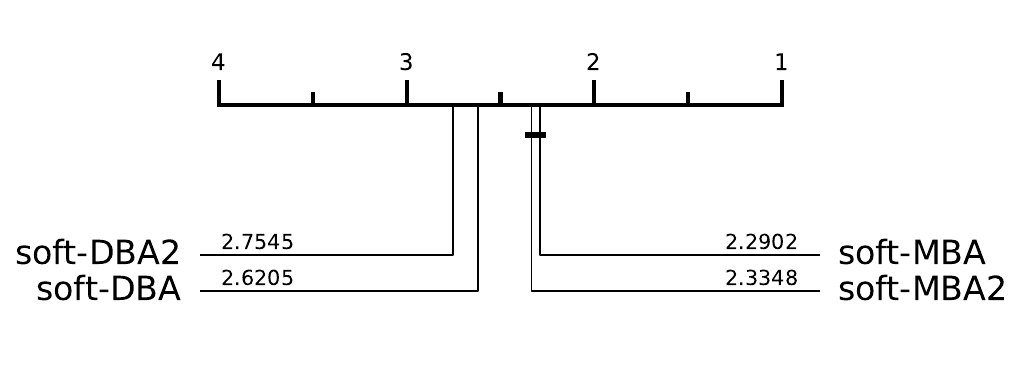}\\
    Accuracy & Balanced Accuracy \\
    \end{tabular}
    \caption{Critical difference diagrams of test data performance for four classification algorithms on 112 UCR datasets.}
    \label{fig:centroids}
\end{figure}

\begin{figure}[htb]
    \centering
            \begin{tabular}{c c}
    \includegraphics[width=0.5\linewidth]{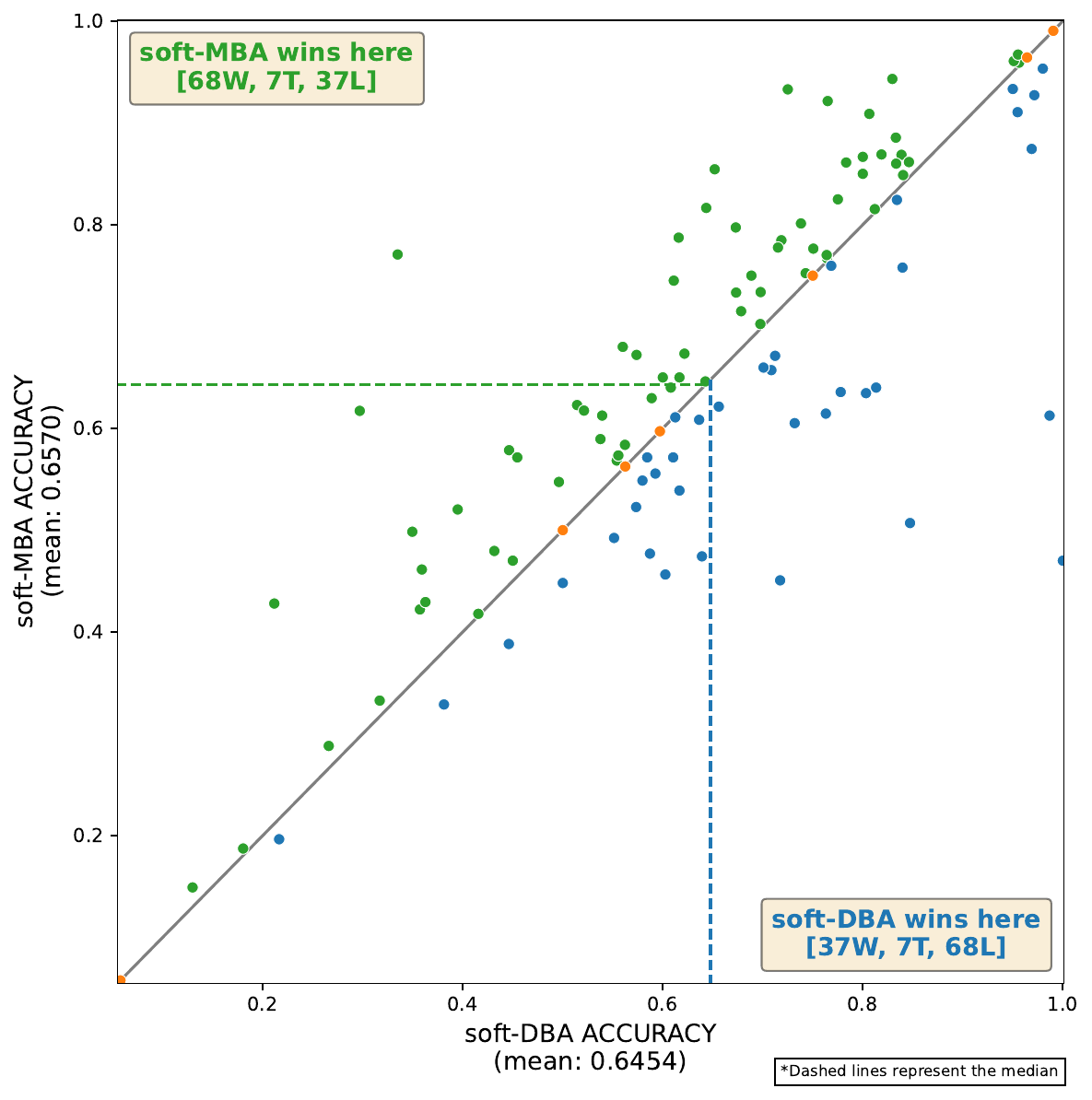} &
    \includegraphics[width=0.5\linewidth]{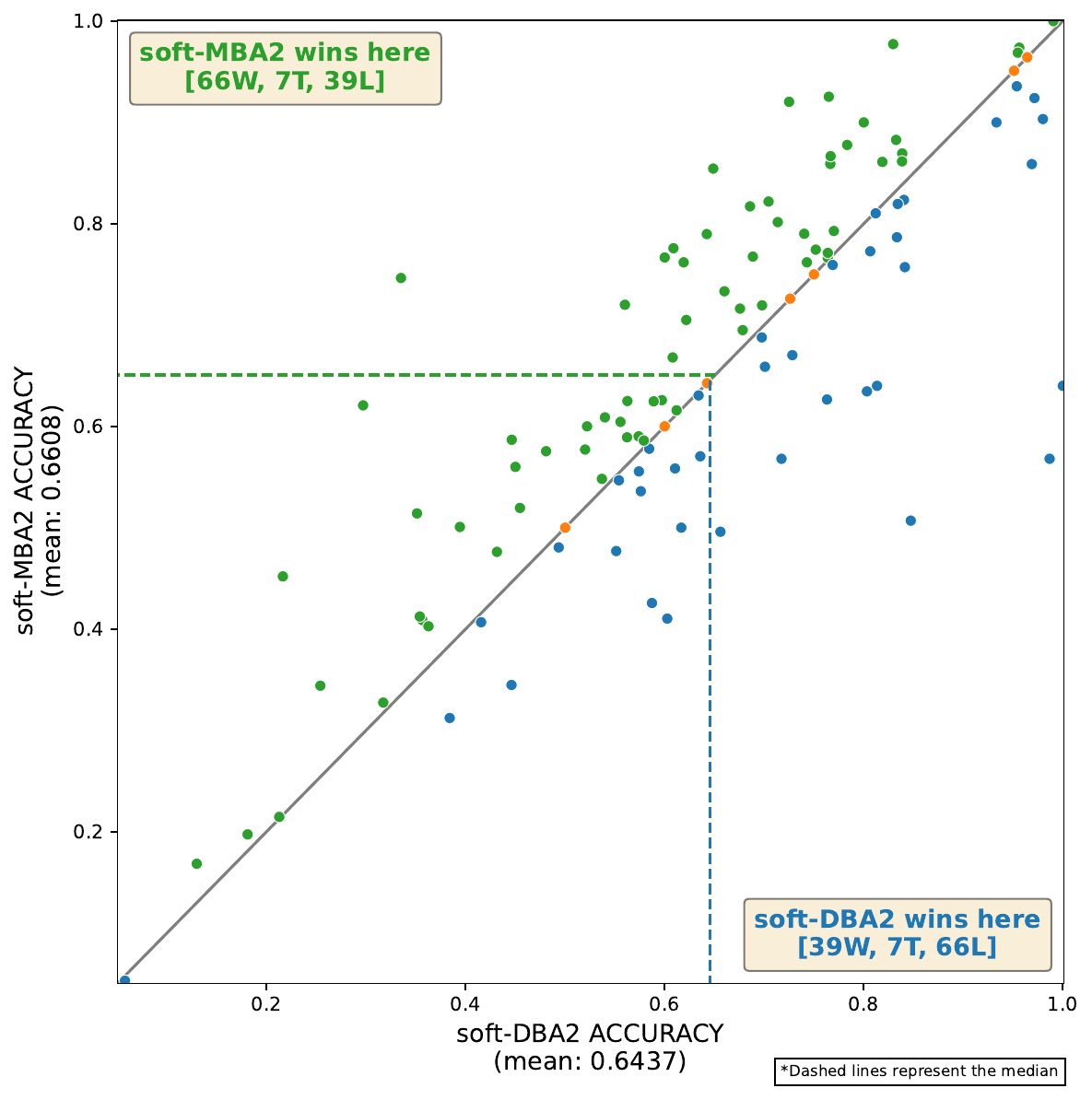}\\
    Variant 1 & Variant 2 \\
    \end{tabular}
    \caption{Scatter plots of accuracy for two versions of k-centroids. The first variant (left) uses the soft distance for forming centroids and for finding neighbours. The second (right) uses the standard distance function for finding neighbours.}
    \label{fig:scatter}
\end{figure}
Figure~\ref{fig:scatter} shows scatter plots of the pairwise performance of the two variants.


\subsection{Limitations}

Soft-MSM has several limitations. First, the soft relaxation trades the exact
metric property of MSM for differentiability, so Soft-MSM should be treated as
an alignment-based loss rather than a metric distance. Second, as with MSM and
Soft-DTW, the dynamic-programming formulation requires full \(m \times m\)
cost and alignment matrices. The asymptotic complexity remains
\(\mathcal{O}(m^2)\), but Soft-MSM introduces larger constant factors through
the soft minimum, transition gate, and backward pass required for gradients.

Third, performance depends on the smoothing parameter \(\gamma\). Very large
values over-smooth the alignment objective, while very small values approach
the hard recursion and can reduce the benefit of smoothing. The stabilisation
parameter \(\epsilon\) also affects the smooth gate, although in our experiments
we fix it to a small numerical constant. Finally, the unnormalised Soft-MSM
objective inherits the self-similarity bias of other log-sum-exp relaxations:
in general \(F_\gamma(x,x)\neq 0\). A divergence-corrected version can remove
this bias, but requires three Soft-MSM evaluations.

\section{Conclusion}
\label{sec:conclusions}

We have introduced Soft-MSM, a differentiable alignment-based loss inspired by the Move-Split-Merge distance. Soft-MSM retains MSM's context-aware transition structure while trading exact metricity for differentiability. We derived the
forward and backward recursions, the associated soft alignment matrix, and the gradient with respect to the input series. The method is implemented in the open-source \texttt{aeon} toolkit to support reuse and reproducibility.

Our experiments show that this relaxation is effective for time series averaging. Soft-MSM barycentre averaging achieves lower MSM Fr\'echet loss than
MBA and SSG-MBA on the vast majority of datasets, and the resulting prototypes
lead to improved clustering and nearest-centroid classification relative to
Soft-DTW-based alternatives. The \textsc{CricketX} case study further illustrates
that Soft-MSM can produce smoother and more interpretable class prototypes.

We further demonstrated that the improved MSM barycentres translate into better clustering performance. Using $k$-means with elastic centroids on the UCR archive, Soft-MBA achieves significantly higher accuracy than Soft-DBA on more datasets and is competitive with a strong baseline such as $k$-Shape. For nearest-centroid classification, Soft-MSM provides a simple way to construct MSM-based prototypes in a fully differentiable manner.

Our empirical evaluation has focused primarily on time series averaging and its downstream use in clustering and nearest-centroid classification. More broadly, Soft-MSM can be used as a differentiable alignment-based loss in settings where
Soft-DTW is currently used as a loss or similarity measure. Potential applications include deep time series forecasting~\cite{le19shape,qing22forecasting,cortez24dayaheadpv}, weakly supervised alignment and segmentation, generative models for data
augmentation~\cite{kamycki20spawner}, class rebalancing for imbalanced
problems~\cite{qiu26esmote}, and similarity-based representation learning with
models such as Series2Vec~\cite{foumani24series2vec}. In these settings,
Soft-MSM provides an alternative inductive bias based on move, split, and merge
operations, which may be particularly useful for piecewise-constant or
event-driven series.

Future work will focus on extending the empirical evaluation to multivariate
and unequal-length time series, exploring scalable approximations for long
sequences, and experimenting with using Soft-MSM in modern deep learning architectures.
Overall, Soft-MSM provides a useful alternative to Soft-DTW for
differentiable, alignment-aware learning with time series.

\noindent\textbf{Reproducibility.}
To support FAIR (Findable, Accessible, Interoperable, and Reusable) principles in research, Soft-MSM is available in the \texttt{aeon} toolkit, alongside an extensive range of optimised time series distance functions. Code to reproduce our experiments and spreadsheets of the results used to generate graphs are available on the associated repository\footnote{\url{https://github.com/time-series-machine-learning/soft-msm}}.

\section*{Acknowledgements}

This work has been supported by the UK Research and Innovation Engineering and Physical Sciences Research Council (grant reference EP/W030756/2). The authors acknowledge the use of the IRIDIS High Performance Computing Facility and associated support services at the University of Southampton, in the completion of this work. We would like to thank all those responsible for helping maintain the time series classification archives and those contributing to open-source implementations of the algorithms.

\bibliographystyle{unsrtnat}
\bibliography{TSCMaster,sn-bibliography}

\end{document}